\documentclass[twoside,11pt]{article}

\usepackage{enumitem}
\usepackage[utf8]{inputenc}
\usepackage{listings}
\usepackage{xcolor}
\usepackage{amsmath}
\usepackage{booktabs}
\usepackage{algorithm}
\usepackage{algorithmic}
\usepackage{multirow}
\usepackage{wrapfig}
\usepackage{makecell}
\usepackage{mathtools}
\usepackage{lastpage}
\usepackage{bm}
\usepackage{subfigure}
\usepackage{wrapfig}
\usepackage{amsthm}
\usepackage{url}
\usepackage{textcomp}
\usepackage{natbib}
\usepackage{hyperref}
\usepackage[preprint]{jmlr2e}

\newcommand{\tabincell}[2]{\begin{tabular}{@{}#1@{}}#2\end{tabular}}

\newcommand{\presec}{\vspace{-0em}}
\newcommand{\postsec}{\vspace{-0em}}

\jmlrheading{25}{2026}{1-\pageref{LastPage}}{4/23}{5/24}{23-0537}{Yanyi Li, Yimu Zhang, and Cong Fang}

\ShortHeadings{Principal-Random Subspace for LLM Activation Compression}{Li, Zhang, Fang}
\firstpageno{1}

\begin{document}

\title{PRAC: Principal-Random Subspace for LLM Activation Compression and Memory-Efficient Training}

\author{\name Yanyi Li\boldsymbol{$^{1*}$} \email  yyl0605@foxmail.com \\
    \name Yimu Zhang\boldsymbol{$^{1*}$} \email zym24@stu.pku.edu.cn \\
    \name Cong Fang\boldsymbol{$^{1,2}$} \email fangcong@pku.edu.cn \\
    \addr $^{1}$ School of Intelligence Science and Technology, Peking University, China \\
    \addr $^{2}$ Institute for Artificial Intelligence, Peking University, China \\
    \addr $^{*}$ Equal contribution. \\
    \vspace{-2.5em}
}

\editor{Zeyi Wen}

\maketitle

\begin{abstract}

Activations have become the primary memory bottleneck in large-batch LLM training. However, existing compression methods fail to exploit the spectral structure of activations, resulting in slow convergence or limited compression. To address this, we bridge the relationship between the algorithm’s fast convergence and the requirements for subspace projection, and show that an effective compression should yield an unbiased estimate of the original activation with low variance. We propose \textbf{P}rincipal-\textbf{R}andom Subspace for LLM \textbf{A}ctivation \textbf{C}ompression (\textbf{PRAC}), which novelly decomposes activations into two components: a principal subspace captured via SVD to retain dominant information, and a random subspace sampled from the orthogonal complement to approximate the tail. By introducing a precise scaling factor, we prove that PRAC yields an unbiased gradient estimator with minimum variance under certain conditions. Extensive experiments on pre-training and fine-tuning tasks demonstrate that PRAC achieves up to 36\% total memory reduction with negligible performance degradation and minimal computational cost.  

\end{abstract}

\presec
\section{Introduction}
\postsec
Memory footprint has emerged as a critical bottleneck in the training of large language models (LLMs). During training, the memory overhead primarily consists of three components: model parameters (weights), optimizer states, and activations memory. The activations represent the intermediate outputs of each layer computed during the forward pass, which are typically retained to compute gradients in the subsequent backward propagation. In practice, a moderately large batch size generally promotes more stable training dynamics and improves GPU parallelism utilization, thereby accelerating convergence efficiently. However, a larger batch size substantially increases the activation memory footprint, turning it into the primary bottleneck and limiting both training efficiency and scalability. As illustrated in Figure \ref{fig:first_image} (Right), training a LLaMA-1B model (batch size\footnote{In this paper, ``batch size" denotes the micro-batch size, representing the maximum number of samples processed during a single forward and backward pass on an individual GPU.} of 128 and sequence length of 256) requires 94.5GB of memory, with model parameters 2.48GB, optimizer states and weight gradients 7.95GB for popular used ADAM algorithm \citep{adam2017}, and activation 84.17GB, where activations account for a staggering 89\% of the total usage. 

\begin{figure}[!t]
    \setlength{\tabcolsep}{2.5pt}
    \begin{center}
    \vspace{-0.3cm}
    \begin{tabular}{cccc}
        \includegraphics[height=0.28\linewidth]{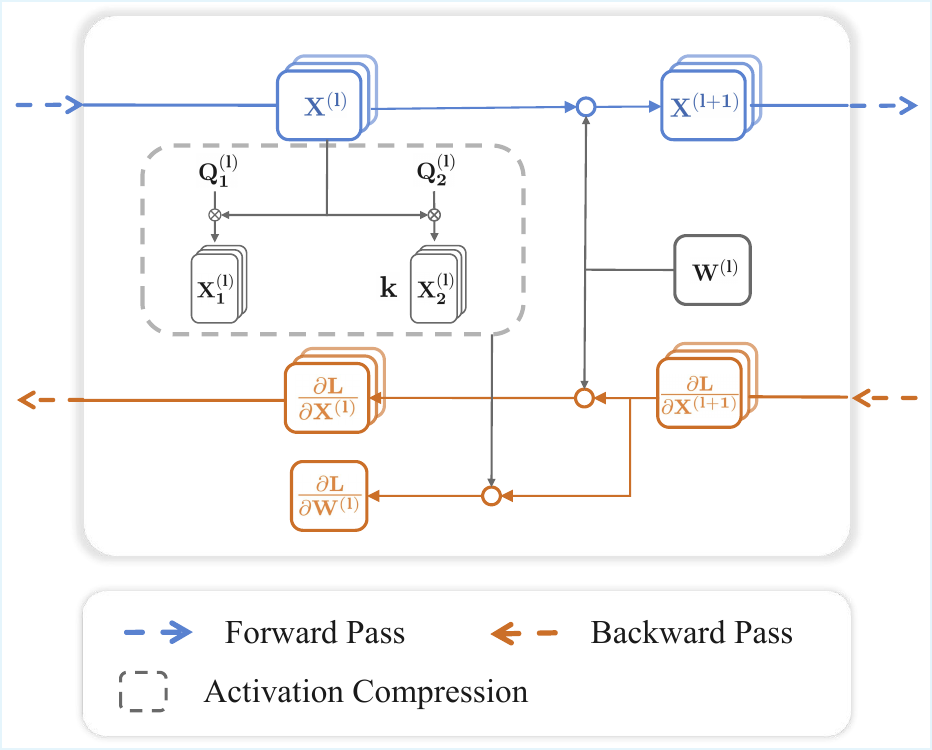}&
        \includegraphics[height=0.28\textwidth]{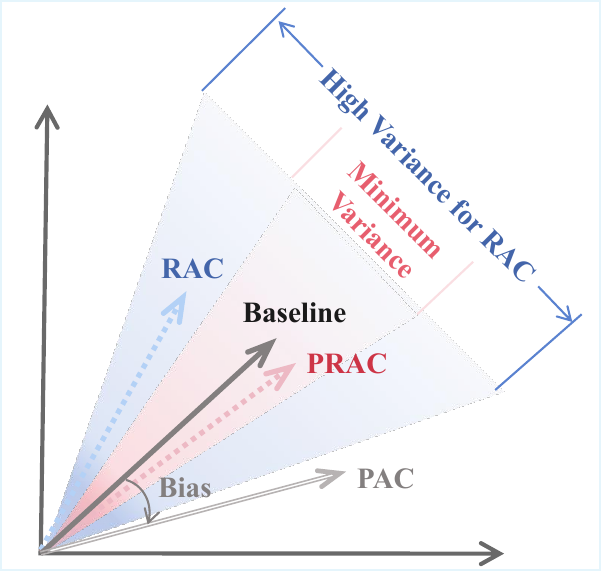}&
        \includegraphics[height=0.28\textwidth]{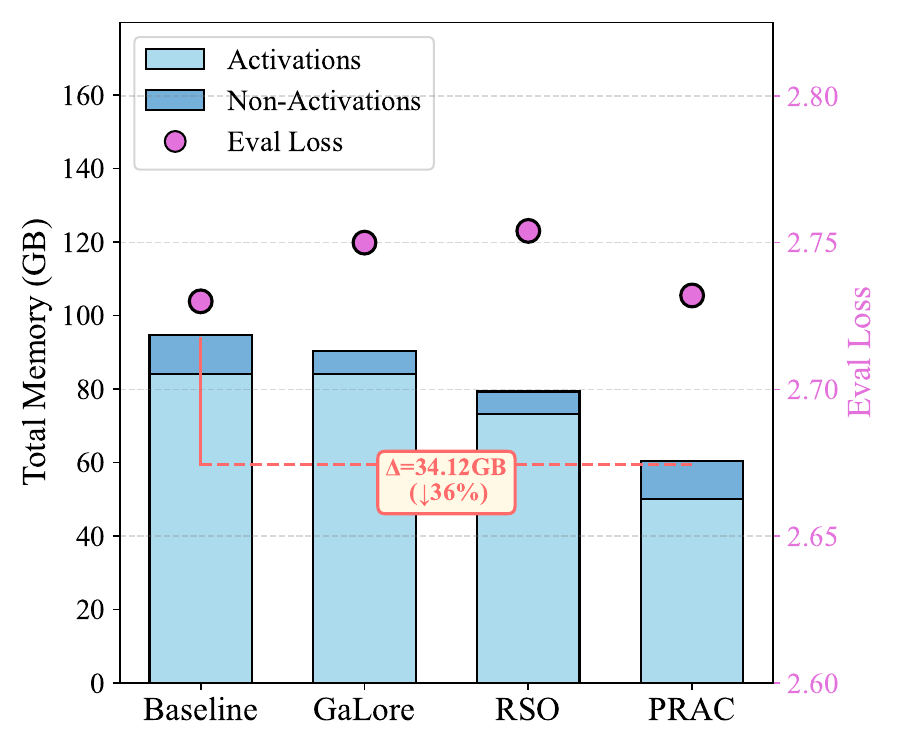}\\
    \end{tabular}
    \end{center}
    \vspace{-0.3cm}
    \caption{The proposed PRAC projects activations onto both the principal and random subspaces and yields the minimum variance unbiased estimator, thus achieving up to 36\% total memory reduction with negligible performance degradation. (Left) The flowchart of PRAC; (Middle) A conceptual comparison of subspace strategies; (Right) Performance comparison on LLaMA-1B.}
    \vspace{-0.4cm}
    \label{fig:first_image}
\end{figure}
A straightforward approach to reducing activation memory is to modify the implementation of the training algorithm while preserving its underlying logic. For instance, gradient checkpointing \citep{checkpoint2016} selectively re-computes certain activations during the back-propagation phase, rather than storing them throughout the forward pass. This technique introduces additional computational overhead in the backward pass, thereby increasing overall training time.

The other line of research focuses on reducing activation memory through novel algorithmic design. The core idea of such approaches is to iteratively optimize within a special chosen subspace of the model parameters.  

One representative example is zeroth-order optimization, which has attracted considerable attention in recent years, leading to multiple proposed variants  \citep{mezo2023,mezo_svrg2024,lozo2024,zo_benchmark2024,hizoo2025,R-adazo2025,muonzo2025}. From an algorithmic perspective, the algorithm per-step samples a random direction $\xi$, typically drawn from a standard normal or uniform spherical distribution, and constructs a gradient estimator of the form
$\frac{1}{\eta}[f (w+ \eta \xi) - f (w)]  \xi$, where $w$ denotes the model parameters and $\eta$ is a small positive constant. Intuitively, as $\eta \to 0$, this estimator approximates $[\xi^\top \nabla f(W)] \xi$, which corresponds to performing an update only along the normalized direction $\xi/\|\xi\|$, scaled by the directional derivative of the objective function. In implementation, such methods only require access to the final output of the forward pass, allowing intermediate activations to be released immediately. This substantially reduces the peak memory footprint during training. However, since these approaches typically update only a one-dimensional subspace per iteration, they converge much more slowly than first-order methods in practice and are generally not suitable for challenging tasks such as large-scale pre-training. 

To improve efficiency, some advanced methods incorporate architectural information to enable multi-dimensional subspace updates. For example, RSO (Random Subspace Optimization) introduced by \citet{rso2025} in the outerloop uniformly samples subspaces and in the inner loop optimizes weights in the subspaces by inserting a low-dimensional trainable module. Activation memory is reduced because only the projected activations need to be stored—a principle inspired by LoRA \citep{lora2021} and ReLoRA \citep{relora2023}. Despite these memory savings, this method still results in non-negligible performance degradation during pre-training, limiting its practical applicability.

We emphasize that a central limitation of existing activation-efficient methods is their inability to leverage the spectral structure of activations, which ultimately compromises training effectiveness. In this paper, we directly focus on  how to  compress activations during training with minimal impact on convergence rate. Following \citet{galore2024,golore2025,mezo2023,mezo_svrg2024}, we still adopt a linear compression scheme: given a projection matrix $P$, the activation matrix $X$ is compressed as $ XP $ and reconstructed as $XPP^\top$.

Our starting point is a standard stochastic optimization formulation, through which we bridge the relationship between the algorithm's fast convergence and the requirements for subspace projection. We demonstrate that a well-chosen projection should yield an unbiased estimate of the original activation with low variance that ensures provable and efficient training convergence.

To design an efficient estimator, we leverage the spectral structure of activations. We formalize widely observed ``low-rank'' phenomenon \citep{lora2021,galore2024,fira2024,adapm2025}  as  the \emph{Activation Degenerate Condition}, which assumes that the singular values of activations consist of a few dominant ones, followed by a long tail of smaller values whose cumulative energy (squared sum) is bounded.

The primary contribution of this paper is the introduction of an activation compression technique, termed PRAC ({\bf P}rincipal-{\bf R}andom Subspace for LLM {\bf A}ctivation {\bf C}ompression). We show this method is optimal in the sense that  it achieves \emph{minimum} variance among all unbiased estimators under the Activation Degenerate Condition.
PRAC consists of two simple components: a principal subspace projection, which obtains a subspace projection matrix $Q_1$  via SVD, and a random subspace projection, where $Q_2$ is sampled uniformly from the orthogonal complement of  $Q_1$. Additionally, a \emph{key} scaling factor $k$ is introduced to ensure the unbiasedness of the estimator.
\[
\tilde{X}=(X P)P^\top =(XQ_1)Q_1^\top+k ( XQ_2)Q_2^\top.
\]

In implemtation for memory-efficient LLM training, PRAC employs a dynamic subspace update schedule that refreshes projection matrices at fixed intervals, thereby rendering the cost of SVD and QR decompositions negligible. The method further maximizes memory efficiency through subspace sharing and a layer-wise policy (see details in Section \ref{subsec:5.1}). We analyze the memory footprint and computational overhead of PRAC, demonstrating its efficiency in both aspects.

We conduct extensive experiments on both pre-training and fine-tuning tasks. Our experiments demonstrate that PRAC matches or surpasses baselines across various tasks, achieving up to 36\% memory saving while maintaining competitive performance.

Our key contributions are summarized as follows:
\begin{itemize}
    \item We propose PRAC, a novel activation compression method that effectively integrates principal and random subspace projections. PRAC is the \emph{first} method to leverage the structural information of activations for memory-efficient pre-training. Theoretically, we prove that PRAC yields an unbiased estimator with \emph{minimum} variance under the Activation Degeneration Condition.
    \item We demonstrate the practical efficacy of PRAC through extensive experiments on both pre-training and fine-tuning tasks. PRAC consistently achieves substantial memory reduction with negligible performance loss and very small computational overhead. 
\end{itemize}

\presec
\section{Related Works}
\postsec
\textbf{Activation-Efficient Methods.} To alleviate activation memory overhead, a technique modifying the checkpointing mechanism during backpropagation was proposed in \citet{checkpoint2016}. This approach has not only achieved widespread adoption but has also catalyzed subsequent research into modifications of the backpropagation algorithm itself \citep{approxbp2024,streambp2025}.

Furthermore, alternative approaches employ zeroth-order optimization to bypass backpropagation entirely \citep{mezo2023,mezo_svrg2024,lozo2024,velora2024,rso2025,compact2025,muonzo2025}, while others retain first-order methods but address the optimization problem within a low-dimensional subspace \citep{rso2025}.

Compression-based approaches \citep{approxbp2024, compact2025, velora2024} offer an alternative strategy by approximating intermediate variables. However, these methods are generally designed for specific sub-components of the model architecture. Furthermore, they fail to explicitly address how compression artifacts influence the convergence rate, limiting their potential in  general-purpose compression for memory-efficient training. 

\noindent\textbf{Optimization States-Efficient Methods.} Considerable research aim to improve the memory efficiency for optimizer states.  GaLore (Gradient Low-Rank Projection) and its variants \citep{galore2024,osd2024,golore2025} perform low-rank compression on gradients. To address the compression of momentum terms in the Adam optimizer, Adafactor \citep{adafactor2018} utilizes factorization techniques to approximate the storage of the second moment. Adam-mini \citep{adam_mini2024} designs a block-wise learning rate strategy based on the heterogeneity of the neural network's Hessian matrix, reducing the memory footprint of the second moment. Fira \citep{fira2024} and Apollo \citep{apollo2025} utilize optimizer states to update adaptive learning rates. However, these algorithms do not modify the backpropagation process and thus cannot reduce the memory usage of activations, which is the dominant factor. Our method is orthogonal to this kind of method and can be combined with them for additional memory savings.





\presec
\section{Starting Point of Activation Compression}\label{sec:starting point}
\postsec
\subsection{Problem Formulation}\label{problem_formulation}
The training of Large Language Models (LLMs) can be formulated as the following stochastic optimization problem:
\begin{equation}\label{optimization_problem}
\min_W f(W)\coloneqq \mathbb{E}_\zeta[F(W;\zeta)],
\end{equation}
where $W$ denotes the model parameters and $\zeta$ represents the random data batch.  We simply study the Stochastic Gradient Descent (SGD) \citep{sgd2010} method, noting that the analysis extends naturally to other optimizers. The update of SGD goes as: 
\begin{equation}\label{sgd}
    W_{t+1}=W_{t}-\eta_t\underbrace{\nabla F(W_t;\zeta_t)}_{\text{update\;term}}.
\end{equation}
To mitigate memory bottlenecks, various methods compress training artifacts (parameters, gradients, or activations). Consequently, the exact gradient is replaced by an approximate update term, denoted as $\tilde{G}(W_t;\zeta_t)$, leading to the modified update rule: 
\begin{equation}\label{update_rule}
    W_{t+1}=W_{t}-\eta_t \tilde{G}(W_t;\zeta_t).
\end{equation}

A fundamental question arises: \textbf{What  properties should the compression satisfy to guarantee fast convergence?}

To address this, we first examine the impact of systematic bias in the gradient estimator.

\begin{proposition} [Non-convergence under Constant Bias]\label{proposition_1}
   Consider two-dimensional strongly convex problem:  $f(w^{(1)},w^{(2)}) =\mathbb{E}_{\xi'}(w^{(1)}- 3\xi')^2+(w^{(2)})^2$ with $\xi'$ following a Bernoulli distribution. The parameters are initialized at  $w^{(1)} = 0, w^{(2)} = 1$. At each step, only the first coordinate is updated by a stochastic gradient:  $\tilde{G} = [ 2(w^{(1)}-3\xi'),0]^\top.$  Then for any $\left\{\eta_t\right\}$ satisfies the Robbins-Monro condition: $\sum_{t=0}^{\infty}\eta_t=\infty,\sum_{t=0}^{\infty}\eta_t^2<\infty$, one has $w^{(2)}_t =1$ for all $t$. Consequently,  $(w^{(1)}_t, w^{(2)}_t)$ will never converge to an approximate stationary point.
\end{proposition}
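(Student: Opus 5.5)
The claim follows by induction on $t$ for the second coordinate, followed by a direct computation of the true gradient along the iterates. No stochastic-approximation machinery is needed. The Robbins--Monro condition on $\{\eta_t\}$ matters only in that it rules out a trivial excuse: the failure comes from the structural bias of $\tilde G$, not from a bad step-size choice.

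\textbf{Step 1: the second coordinate is frozen.} Under the update rule \eqref{update_rule}, $w^{(2)}_{t+1} = w^{(2)}_t - \eta_t \cdot 0 = w^{(2)}_t$ for every $t$ and every realization of $\xi'$, since the second entry of $\tilde G$ is identically zero. Because $w^{(2)}_0 = 1$, induction gives $w^{(2)}_t = 1$ almost surely for all $t$, whatever the step sizes.

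\textbf{Step 2: the gradient stays bounded away from zero.} Compute the true gradient,
\[
\nabla f(w^{(1)},w^{(2)}) = \bigl[\,2(w^{(1)} - 3\,\mathbb{E}\xi'),\; 2w^{(2)}\,\bigr]^\top .
\]
Its second component along the trajectory equals $2$ for all $t$. Hence $\|\nabla f(w_t)\| \ge 2$ for every $t$, and this holds deterministically, not just in expectation. Therefore no iterate, and no limit point of the iterates, is an $\epsilon$-approximate stationary point for any $\epsilon < 2$. In particular the iterates can never reach the unique minimizer $(3\,\mathbb{E}\xi', 0)$. This is the sense of ``never converge to an approximate stationary point.''

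\textbf{Step 3: what the first coordinate does, and where the bias sits.} One can also note that the first coordinate runs standard SGD on $\mathbb{E}(w^{(1)} - 3\xi')^2$, so it converges to $3\,\mathbb{E}\xi'$ almost surely under Robbins--Monro. This remark is not needed for the claim; it only shows that the failure is caused entirely by the bias. We have $\mathbb{E}\tilde G - \nabla f = [0, -2w^{(2)}]^\top$, which equals $[0,-2]^\top$ along the trajectory, a constant nonvanishing bias.

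There is no real obstacle. The only care needed is in phrasing ``approximate stationary point'' quantitatively, via $\|\nabla f(w_t)\| \ge 2$, and in stating the almost-sure nature of Step 1.
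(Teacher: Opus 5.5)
Your proof is correct and matches the paper's reasoning: the paper gives no separate appendix proof and supports the proposition only with the remark that $w^{(2)}$ is never updated, so the estimator has a $\Theta(1)$ bias. Your Steps 1--2 make this explicit, certifying non-stationarity through $\|\nabla f(w_t)\| \ge 2w^{(2)}_t = 2$ for every $t$, for any choice of step sizes.
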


In the simple example above, the gradient estimator is biased by $\Theta(1)$ because $w^{(2)}$ is never updated. It is interesting to  observe that when $|w^{(1)}| < 2$, the magnitude of the stochastic gradient for $w^{(1)}$ (i.e., $2|w^{(1)}-3\xi'|$) is always larger than that for $w^{(2)}$ (i.e., $2|w^{(2)}|$). Consequently, even if $w^{(1)}$ converges to its minimizer $0$, the maximum selection rule that consistently picks the coordinate with the largest gradient magnitude will never update $w^{(2)}$. It implies that principal methods, such as Galore \citep{galore2024}, may fail to converge in general stochastic optimization settings. Proposition \ref{proposition_1}  also establishes that a constant biased estimator cannot ensure convergence.  We then consider unbiased  estimator with non-negligible variance.

\begin{proposition}[Convergence Rate with Bounded Gradient Variance, \citet{sgd_pre1}]\label{proposition_2}
    Let the objective function $f$ is L-smooth and bounded below, and denote $\Delta=f(W_1)-\inf{f}$. Assume that at each iteration $t$ we have access to an unbiased stochastic gradient $\tilde{G}(W_t,\zeta_t)$ satisfying
    \(
    \mathbb{E}[\tilde{G}(W_t,\zeta_t)\mid W_t]=\nabla f(W_t),\mathbb{E}[\|\tilde{G}(W_t,\zeta_t)-\nabla f(W_t)\|^2 \mid W_t]\leq \sigma^2.
    \)
    Consider the update rule in \eqref{update_rule} with constant step sizes $\eta_t=\min\left\{\frac{1}{L},\frac{\sqrt{2\Delta/L}}{\sigma\sqrt{N}}\right\}$ and then randomly selects an output $W_R$ from $\{W_1,\cdots,W_T\}$ according to the certain probability mass function. Then the following bound holds:
    \[
    \mathbb{E}[\|\nabla f(W_R)\|^2]\leq \frac{2L\Delta}{T}+2\sqrt{\frac{2L\Delta}{T}}\sigma.
    \]
\end{proposition}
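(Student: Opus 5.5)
The plan is the standard nonconvex SGD argument of \citet{sgd_pre1}: a one-step descent inequality from $L$-smoothness, conditional expectation to exploit unbiasedness, telescoping over $t=1,\dots,T$, and a final choice of step size. Throughout we write $W_{t+1}=W_t-\eta\tilde{G}_t$ with $\tilde{G}_t=\tilde{G}(W_t,\zeta_t)$, $\eta_t\equiv\eta$, and read $N$ in the step-size formula as $T$.

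First, $L$-smoothness gives the descent inequality
\[
f(W_{t+1})\le f(W_t)-\eta\langle\nabla f(W_t),\tilde{G}_t\rangle+\tfrac{L\eta^2}{2}\|\tilde{G}_t\|^2 .
\]
Take the conditional expectation given $W_t$. Unbiasedness turns the inner product into $\|\nabla f(W_t)\|^2$. The bias--variance decomposition gives $\mathbb{E}[\|\tilde{G}_t\|^2\mid W_t]\le \|\nabla f(W_t)\|^2+\sigma^2$. Together these yield
\[
\mathbb{E}[f(W_{t+1})\mid W_t]\le f(W_t)-\eta\bigl(1-\tfrac{L\eta}{2}\bigr)\|\nabla f(W_t)\|^2+\tfrac{L\eta^2\sigma^2}{2}.
\]
Since $\eta\le 1/L$, we have $1-L\eta/2\ge 1/2$. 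Taking total expectation, summing over $t=1,\dots,T$, and using $f(W_{T+1})\ge\inf f$ gives
\[
\tfrac{\eta}{2}\sum_{t=1}^T\mathbb{E}\|\nabla f(W_t)\|^2\le \Delta+\tfrac{TL\eta^2\sigma^2}{2}.
\]

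Next, choose $R$ uniformly on $\{1,\dots,T\}$; this is the ``certain probability mass function'', since constant steps make the general weights $\propto \eta_t(1-L\eta_t/2)$ uniform. Dividing by $T\eta/2$ gives
\[
\mathbb{E}\|\nabla f(W_R)\|^2\le \frac{2\Delta}{T\eta}+L\eta\sigma^2 .
\]

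The remaining step is plugging in $\eta=\min\{1/L,\ \sqrt{2\Delta/L}/(\sigma\sqrt T)\}$. Its second entry equalizes the two terms above, so we split into the two cases of the minimum.

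\emph{Case $\eta=1/L$.} This happens when $1/L\le\sqrt{2\Delta/L}/(\sigma\sqrt T)$, i.e.\ $\sigma^2\le 2L\Delta/T$. The first term is $2L\Delta/T$. The second is $\sigma^2/L\cdot L=\sigma^2\le\sigma\sqrt{2L\Delta/T}$, using $\sigma\le\sqrt{2L\Delta/T}$.

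\emph{Case $\eta=\sqrt{2\Delta/L}/(\sigma\sqrt T)$.} Direct computation shows each term equals $\sigma\sqrt{2L\Delta/T}$, so their sum is $2\sigma\sqrt{2L\Delta/T}$.

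In both cases the bound is at most $\frac{2L\Delta}{T}+2\sqrt{\frac{2L\Delta}{T}}\sigma$, as claimed.

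The only delicate points are this case split for the step size, including checking that the constants match exactly, and interpreting the unspecified output distribution as uniform. The rest is routine.
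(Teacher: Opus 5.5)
Your proof is correct and is the standard argument of \citet{sgd_pre1} (descent lemma, unbiasedness plus the variance bound, telescoping, and a uniform output index for constant steps); the paper cites that result and gives no proof of its own. The only cosmetic difference from the cited source is that it avoids your case split by using $1/\eta \le L + \sigma\sqrt{T}/\sqrt{2\Delta/L}$ and $\eta \le \sqrt{2\Delta/L}/(\sigma\sqrt{T})$ at once, which yields exactly $\frac{2L\Delta}{T}+2\sigma\sqrt{2L\Delta/T}$.
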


Proposition \ref{proposition_2}  is a standard result in non-convex optimization \citep{sgd_pre2, sgd_pre1}. It 
demonstrates that  unbiased compression ensures convergence. Moreover, the convergence rate is hindered by the variance $\sigma^2$. Therefore, an effective compression should yield a gradient estimator that is both \textbf{unbiased} and \textbf{low-variance}.

Since activations dominate memory usage in LLM training, we translate the gradient-level requirements established above into concrete criteria for activation compression.

\subsection{Criteria for Activation Compression}\label{activation compression}

The theoretical constraints established in Section \ref{problem_formulation} for general gradients directly inform the design of activation compression. For a linear layer, such as the Query layer in the attention block, the quality of the activation reconstruction dictates the quality of the gradient estimate.

\begin{lemma}\label{lemma:grad_est_properties}
Consider a linear layer with forward pass $Y = XW$. Let $\tilde{X}$ be a compressed estimator of the activation $X$. If $\tilde{X}$ is unbiased ($\mathbb{E}[\tilde{X}]=X$) with bounded variance $\mathbb{E}[\|\tilde{X}-X\|_F^2]\leq \sigma^2$, then the resulting gradient estimator $\tilde{G} = \tilde{X}^\top (\nabla_Y L)$ satisfies:
\begin{enumerate}
\item \textbf{Unbiasedness:} The gradient estimator is unbiased with respect to the true gradient:
\[\mathbb{E}[\tilde{G}]=\mathbb{E}[\tilde{X}^\top] (\nabla_Y L)=\nabla_W L.\]
\item \textbf{Bounded Variance:} The gradient variance is bounded by the activation variance and the upstream gradient norm:
\[
\mathbb{E}[\|\tilde{G}-\nabla_W L\|_F^2]\leq\sigma^2\|\nabla_YL\|_2^2.
\]
\end{enumerate}
\end{lemma}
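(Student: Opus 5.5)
The plan is to use only linearity of expectation and one submultiplicativity inequality for matrix norms. The key modeling point, which I will state explicitly, is that the randomness of the compression enters only through $\tilde{X}$. The upstream gradient $\nabla_Y L$ comes from the forward/backward pass on the exact (uncompressed) output $Y = XW$, so it is deterministic conditional on $X$, $W$ and the data batch. All expectations below are over the compression randomness (e.g., the random subspace draw) with $X$, $W$ and the batch held fixed.

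\textbf{Unbiasedness.} Since $Y = XW$, the chain rule gives $\nabla_W L = X^\top \nabla_Y L$. Transposition and right-multiplication by the fixed matrix $\nabla_Y L$ are linear maps, so they commute with expectation. Hence
\[
\mathbb{E}[\tilde{G}] = \mathbb{E}[\tilde{X}^\top]\,\nabla_Y L = (\mathbb{E}[\tilde{X}])^\top \nabla_Y L = X^\top \nabla_Y L = \nabla_W L .
\]

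\textbf{Bounded variance.} First write the error in factored form,
\[
\tilde{G} - \nabla_W L = (\tilde{X} - X)^\top \nabla_Y L .
\]
Next apply the mixed-norm inequality $\|AB\|_F \le \|A\|_F \|B\|_2$, where $\|\cdot\|_2$ is the spectral norm. This holds because $\|AB\|_F^2 = \sum_i \|a_i^\top B\|_2^2 \le \sum_i \|a_i\|_2^2 \|B\|_2^2$, summing over the rows $a_i^\top$ of $A$. Taking $A = (\tilde{X} - X)^\top$ and $B = \nabla_Y L$, and using $\|A^\top\|_F = \|A\|_F$, gives
\[
\|\tilde{G} - \nabla_W L\|_F^2 \le \|\tilde{X} - X\|_F^2 \, \|\nabla_Y L\|_2^2 .
\]
Finally, take expectations. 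The factor $\|\nabla_Y L\|_2^2$ is deterministic and can be pulled out, and the hypothesis $\mathbb{E}\|\tilde{X} - X\|_F^2 \le \sigma^2$ then yields the claimed bound.

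\textbf{Main obstacle.} The computations themselves are routine; the only delicate point is justifying that $\nabla_Y L$ is independent of, or fixed with respect to, the compression randomness. This holds because compression affects only the stored copy of $X$ used in the weight-gradient computation, not the forward output or the backpropagated signal into $Y$. If $X$ and $\nabla_Y L$ are themselves random through the data $\zeta$, I would state both claims conditionally on $(X, W, \zeta)$ and then use the tower property. One caveat in that case: the variance bound then holds only conditionally, unless $\|\nabla_Y L\|_2$ is assumed bounded.
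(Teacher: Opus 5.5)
Your proof is correct and follows essentially the same route as the paper, which gives no separate proof beyond the inline linearity step $\mathbb{E}[\tilde{G}]=\mathbb{E}[\tilde{X}^\top]\nabla_Y L$; your factorization $\tilde{G}-\nabla_W L=(\tilde{X}-X)^\top\nabla_Y L$ combined with $\|AB\|_F\le\|A\|_F\|B\|_2$ is the standard way to supply the variance bound the paper leaves unproved. One small caveat on your modeling remark: in the full PRAC pipeline the backward passes of downstream GeLU/LayerNorm layers use reconstructed inputs, so $\nabla_Y L$ is not deterministic given $X$, $W$ and the batch, and the lemma (as in the paper) should be read with $\nabla_Y L$ taken as a fixed, exact upstream gradient that is independent of this layer's compression draw.
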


Lemma \ref{lemma:grad_est_properties} shows that constructing an unbiased and low-variance approximation of activations suffices to satisfy the convergence guarantees in Propositions \ref{proposition_1} and \ref{proposition_2}.

Following \citet{galore2024,golore2025,mezo2023,mezo_svrg2024}, we adopt a subspace projection approach. Let $P\in\mathbb{R}^{n\times r}$ be a projection matrix, and the activation $X$ is compressed to a lower-dimensional representation $XP$ and reconstructed as $\tilde{X}=(XP)P^\top$. The central challenge, therefore, lies in designing the projection matrix $P$ such that the reconstruction $\tilde{X}$ minimizes variance while maintaining unbiasedness, specifically tailored to the spectral structure of the activations.

\presec
\section{Proposed Method: PRAC}\label{sec:proposed_prac}
\postsec

In this section, we introduce PRAC, a hybrid framework motivated by the spectral structure of activations. By integrating the principal and random subspaces to address their respective limitations, we balance the bias-variance trade-off. Finally, we prove that PRAC yields the minimum variance unbiased estimator under certain conditions.


\subsection{Spectral Analysis of Activations}

\begin{wrapfigure}{r}{0.5\linewidth}
    \centering
    \vspace{-0.1cm}
    \includegraphics[width=\linewidth]{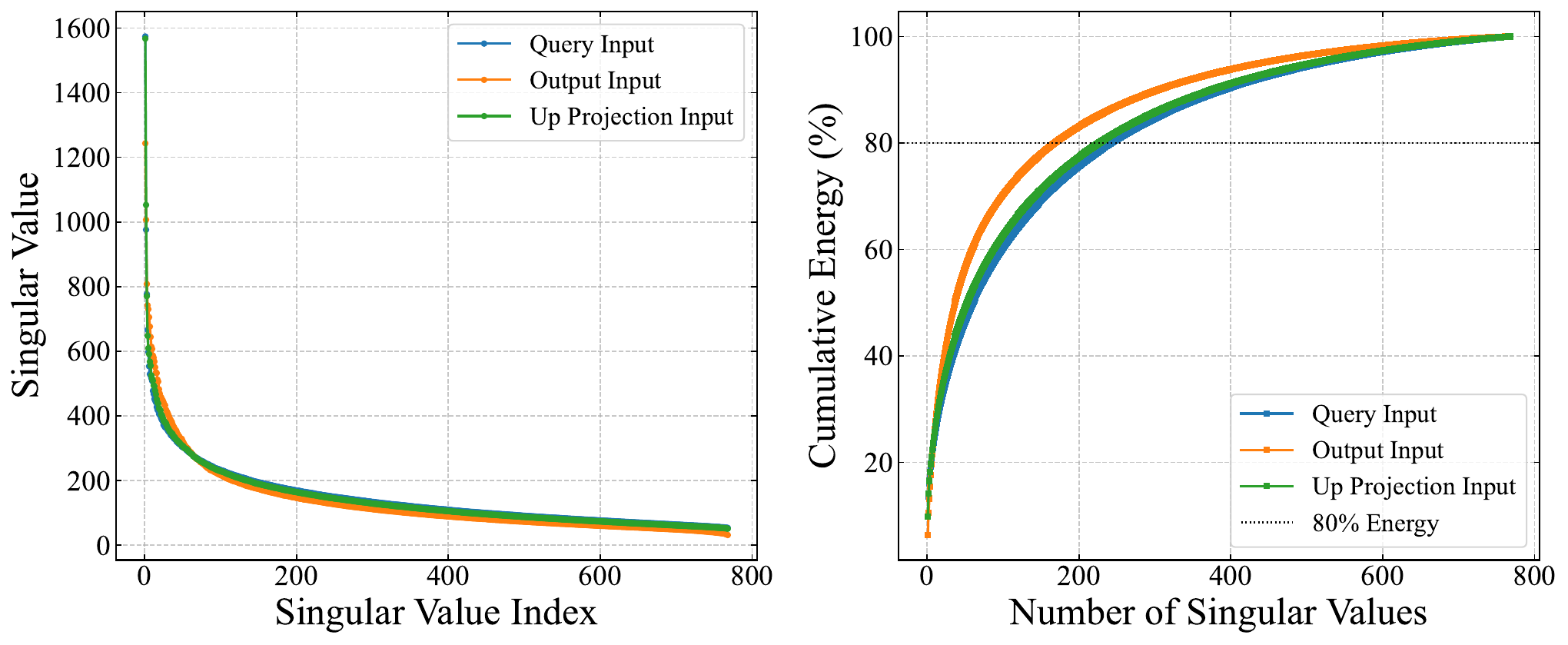} 
    \caption{Singular value spectrum (Left) and cumulative energy ratio (Right).}
    \label{fig:svd}
    \vspace{-0.6cm}
\end{wrapfigure}

To design an effective projection matrix $P$ in Section \ref{activation compression}, we analyze the spectral structure of activations using the 10th layer of LLaMA-130M (at 10\% training progress) as an example. 

As shown in Figure \ref{fig:svd}, the spectrum exhibits two distinct characteristics: a few dominant singular values capture the majority of the spectral energy; and the remaining singular values form a long tail that decays slowly.

Note the ``low-rank’’ structure is commonly observed in weight matrices \citep{lora2021,relora2023}. We study activation matrix and emphasize that the long-tail directions cannot be ignored—truncating them significantly slows down convergence \citep{fira2024,adapm2025}. We introduce the term \emph{degenerate condition} to characterize this observed phenomenon.

\begin{assumption}[Activation Degenerate Condition]\label{ass:spectral_decay}  For activation matrix $X$,  let $\sigma_i$  be the $i$-th largest singular value of the activations.  We say $X$ satifies the  $(s, q)$-degenerate condition ($X\sim(s,q)-D$ in short) if it admits: $\sum_{s+1}^n \sigma_i^2 \leq q$.
\end{assumption}
This assumption posits that only the top $s$ singular values of the activation matrix are large. Since the remaining singular values are typically small and roughly equal in practice, we bound the rest squared sum (same as the square of the Frobenius norm for the rest matrix) by a constant $q$. 

Based on Assumption \ref{ass:spectral_decay}, we can conduct a theoretical analysis of the optimal design of the projection matrix $P$ (see Section \ref{sec:prac}). Before that, let us first introduce two key components of PRAC, which are also subspace projection methods.

\subsection{Key Components of PRAC}\label{sec:key components}
Intuitively, one might consider two natural approaches to activation compression: projecting onto the principal subspace of the activations, or onto a random subspace. However, both have inherent drawbacks that our method addresses.

\noindent\textbf{Principal Subspace for Activation Compression (PAC).}
PAC retains the significant information by projecting activations onto their principal subspaces. Let the Singular Value Decomposition (SVD) of the activation matrix be $X=U\Sigma V^\top=\sum_{i=1}^n s_iu_iv_i^\top$. We define the projection matrix $Q_1=\left[v_1^{(1)},\cdots,v_{r_1}^{(1)}\right]\in \mathbb{R}^{n\times r_1}$ using the top-$r_1$ right singular vectors. Then $XQ_1$ captures the rank-$r_1$ principal information. For the rank $r_1 < n$, PAC introduces a systematic reconstruction bias $\Delta=\|X- XQ_1Q_1^\top\|_F^2=\sum_{i=r_1+1}^{n}\sigma_i^2>0$. As discussed in Section \ref{activation compression}, this bias can lead to non-convergence in the optimization problem (\ref{optimization_problem}).

\noindent\textbf{Random Subspace for Activation Compression (RAC).} To avoid bias, RAC projects activations onto a random subspace sampled uniformly from the Stiefel manifold $\mathrm{St}_{n,r_2}$, defined as  $\mathrm{St}_{n,r}=\left\{P\in \mathbb{R}^{n\times r} |P^\top P=I_r\right\}$. The construction proceeds as follows: First, we generate a random matrix $S\in \mathbb{R}^{n\times r_2}$ with entries $S_{ij}\sim \mathcal{N}(0,1)$. We then perform QR decomposition on $S$ to obtain an orthogonal basis $Q_2\in \mathbb{R}^{n\times r_2}$, such that $Q_2^\top Q_2=I$. 

While RAC can provide an unbiased estimator of the activation (via setting a scaling factor), the inherent randomness introduces high variance: $ \mathbb{E}\|\tilde{X}_{\text{rac}} -X \|_F^2 \leq (\frac{n}{r_2}-1)\left(\sum_{i=1}^s \sigma_i^2+q\right)$ due to the top singular values. (see Theorem \ref{thm:unbiased_reconstruction_of_rac_} and \ref{thm:gradient_variance_of_rac_} in Appendix \ref{omit_2}). This high variance destabilizes the training process and slows convergence.

\subsection{Optimal Design via Hybrid Projection} \label{sec:prac}
We now establish the lower bound variance to estimate the activations. Specifically, for any activation $X$, we consider the random subspace projection method where the projection matrix $P \in \mathbb{R}^{n \times r}$ is drawn from a distribution $\mathbb{P}_X$, which must satisfy the unbiasedness condition (i.e. $\mathbb{E}_{P\sim\mathbb{P}_X}[XPP^\top]=X$) and depend on $X$, then the lower bound variance can be written as 
\begin{equation}\label{eq:lower bound}
 \sup_{X \sim  (s,q)-D  } \left(\inf_{\substack{
    P\sim \mathbb{P}_X, \\
    \mathbb{E}_{\mathbb{P}_X}[XPP^\top] = X}} \mathbb{E}_{\mathbb{P}_X} \|X PP^\top -X \|_F^2\right).   
\end{equation}

For this min-max optimization problem, if the small singular values of $X$ exhibit non-uniformity, $P$ can be tailored to exploit this structure. Intuitively, a distribution where $X$ possesses approximately uniform tail singular values represents a relatively worst-case scenario. The following lemma gives the lower bound explicity.

\begin{lemma}[Lower Bound]\label{lemma:lower}
Under Assumption \ref{ass:spectral_decay}, when $s< r< n$, we have $\eqref{eq:lower bound} \geq (\frac{n-s}{r-s}-1)q$. 
\end{lemma}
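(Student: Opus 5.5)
The plan is to exhibit a hard instance $X$ that satisfies the $(s,q)$-degenerate condition, and to show that every unbiased random projection has variance at least $(\frac{n-s}{r-s}-1)q$ on it. Take $X$ with full column rank $n$ (assuming at least $n$ rows) and SVD $X=U\Sigma V^\top$. Give it $s$ head singular values all equal to $M$, where $M$ is a free parameter we will send to $\infty$, and $n-s$ tail singular values all equal to $c=\sqrt{q/(n-s)}$, so that $\sum_{i>s}\sigma_i^2=q$. The supremum in \eqref{eq:lower bound} is at least the infimum for this $X$, for every $M$.

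\textbf{Reduction to a matrix problem.} Write $A=V^\top PP^\top V$. This is positive semidefinite with rank at most $r$; note that $P$ need not have orthonormal columns, since scaling is allowed. Because $X$ has full column rank, the condition $\mathbb{E}[XPP^\top]=X$ is equivalent to $\mathbb{E}[A]=I_n$. Let $D=\mathrm{diag}(\sigma_i^2)$. Expanding the square and using unbiasedness to evaluate the cross term gives
\[
\mathbb{E}\|XPP^\top-X\|_F^2=\mathbb{E}\,\mathrm{tr}(DA^2)-\mathrm{tr}(D)=\sum_{i\le s}M^2\big(\mathbb{E}\|Ae_i\|^2-1\big)+c^2\big(\mathbb{E}\|A_{T,:}\|_F^2-(n-s)\big),
\]
where $T$ denotes the tail coordinates. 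Every summand $\mathbb{E}\|Ae_i\|^2-1=\mathbb{E}\|Ae_i-e_i\|^2$ is nonnegative. It therefore suffices to show that, for all sufficiently large $M$, any unbiased law whose variance is at most some fixed finite value $V^\ast$ also satisfies $\mathbb{E}\|A_{T,:}\|_F^2\gtrsim (n-s)^2/(r-s)$. The RAC construction provides such a finite benchmark $V^\ast$, by Theorems \ref{thm:unbiased_reconstruction_of_rac_} and \ref{thm:gradient_variance_of_rac_}.

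\textbf{Head directions are forced to be nearly deterministic.} For a near-optimal law, the head terms give $\mathbb{E}\|Ae_i-e_i\|^2\le V^\ast/M^2\to0$ for each $i\le s$. So $A$ restricted to the head columns converges in $L^2$ to $[I_s;0]$; in particular $A_{HH}\to I_s$ and $A_{TH}\to0$. The core spectral inequality we need is the following. Let $A\succeq0$ have rank at most $r$ and satisfy $A_{HH}=I_s$ and $A_{TH}=0$. Then $A$ is block diagonal with $\mathrm{rank}(A_{TT})\le r-s$, so Cauchy--Schwarz on its eigenvalues gives
\[
\|A_{TT}\|_F^2\ge (\mathrm{tr}A_{TT})^2/(r-s).
\]
We will show a perturbed version of this. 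Let $\lambda_1\ge\dots\ge\lambda_r$ be the nonzero eigenvalues of $A$. By Weyl/Cauchy interlacing, $s$ of them are within $O(\varepsilon)$ of the spectrum of $A_{HH}$, where $\varepsilon=\|A_{:,H}-[I_s;0]\|_F$. This yields
\[
\|A_{T,:}\|_F^2\ \ge\ \frac{(\mathrm{tr}A_{TT})^2}{r-s}-C(1+\|A\|)\,\varepsilon.
\]
Taking expectations and applying Jensen together with $\mathbb{E}\,\mathrm{tr}A_{TT}=n-s$, the main term is at least $(n-s)^2/(r-s)$. The error term vanishes as $M\to\infty$: bound $\mathbb{E}[(1+\|A\|)\varepsilon]$ by Cauchy--Schwarz, using that $\mathbb{E}\|A\|_F^2$ stays bounded under the variance budget, since $\mathrm{tr}(DA^2)\ge c^2\|A\|_F^2$.

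\textbf{Conclusion and main obstacle.} Combining the pieces, the variance is at least $c^2\big((n-s)^2/(r-s)-(n-s)\big)-o(1)=\big(\frac{n-s}{r-s}-1\big)q-o(1)$. Letting $M\to\infty$ inside the supremum gives the claim. The delicate step is the perturbative rank argument: rank is not continuous, so the statement ``the head directions use up $s$ of the $r$ available dimensions'' must be phrased through eigenvalue interlacing with error terms that are integrable. I would first try to avoid perturbation entirely with the exact decomposition
\[
\|A\|_F^2=\|A_{HH}\|_F^2+2\|A_{TH}\|_F^2+\|A_{TT}\|_F^2,
\]
combined with $\mathrm{rank}(A)\le r$ through the inequality $\sum\lambda_i^2\ge(\mathrm{tr}A)^2/r$ applied after deflating the head block. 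If that inequality does not close cleanly, I would fall back on the limiting argument above.
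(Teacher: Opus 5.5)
Your plan follows the same route as the paper's proof. Both use an aligned hard instance with the $s$ head singular values sent to infinity and a flat tail $\sigma_i^2=q/(n-s)$. Both reduce unbiasedness to $\mathbb{E}[A]=I$, apply Cauchy--Schwarz to a tail block of rank at most $r-s$, and finish with Jensen. Your identity $\mathbb{E}\|XPP^\top-X\|_F^2=\mathbb{E}\,\mathrm{tr}(DA^2)-\mathrm{tr}(D)$ and the nonnegativity of each column term are correct.

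You are also right that a real limit argument is needed. The paper simply restricts to laws with $\alpha_i=0$ for $i\le s$, but the inner infimum is taken separately for each $X$. For moderate head values, non-block laws genuinely beat the bound: when all singular values equal $c$, RAC gives $\frac{(n-r)n}{r}c^2<\frac{(n-r)(n-s)}{r-s}c^2$.

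However, two steps of your execution fail as written.

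\textbf{The benchmark.} RAC has variance $(\frac{n}{r}-1)(sM^2+q)$, so $V^\ast/M^2\not\to0$. Use PRAC with $r_1=s$, $r_2=r-s$ (Theorem~\ref{thm:gradient_variance}) instead. Its variance $V^\ast=(\frac{n-s}{r-s}-1)q$ does not depend on $M$, so it suffices to bound laws with variance at most $V^\ast$, and for these $\mathbb{E}\varepsilon^2\le V^\ast/M^2$. Likewise, $\mathrm{tr}(DA^2)\ge c^2\|A\|_F^2$ does not bound $\mathbb{E}\|A\|_F^2$ uniformly in $M$, because $\mathbb{E}\,\mathrm{tr}(DA^2)=V+sM^2+q$. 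Read the bound off your own decomposition instead: it gives $\mathbb{E}\|A_{T,:}\|_F^2\le n-s+V^\ast/c^2$.

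\textbf{The perturbed inequality.} This is the more serious gap: the global bound $\|A_{T,:}\|_F^2\ge(\mathrm{tr}A_{TT})^2/(r-s)-C(1+\|A\|)\varepsilon$ is false. Take $s=1$, $r=2$, $n=3$ and
\[
A=\begin{bmatrix}1&a^\top\\ a&LI_2\end{bmatrix},\qquad \|a\|^2=L .
\]
This $A$ is PSD of rank $2$, with $\varepsilon=\sqrt L$ and $\|A\|=L+1$. Yet $\|A_{T,:}\|_F^2=2L^2+L$ while $(\mathrm{tr}A_{TT})^2/(r-s)=4L^2$, a deficit of order $L^2$ that exceeds $C(L+2)\sqrt L$.

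Interlacing only controls the small-$\varepsilon$ regime. A near-optimal law may put small mass on outcomes like this one, so you cannot take expectations of a global bound.

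Your preferred exact route does not close either if the rank bound is applied to $A$ itself. Write $t=\mathrm{tr}A_{TT}$. Even when $A_{HH}=I_s$ and $A_{TH}=0$, it only gives
\[
\|A_{TT}\|_F^2\ge\frac{(s+t)^2}{r}-s=\frac{t^2}{r-s}-\frac{s(t-r+s)^2}{r(r-s)},
\]
which is short by a positive constant at $t=n-s$.

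\textbf{The repair.} Deflate by the Schur complement and localize. On $G=\{\varepsilon\le 1/2\}$ we have $A_{HH}\succeq\frac12 I_s$. Hence $S=A_{TT}-A_{TH}A_{HH}^{-1}A_{HT}\succeq 0$ has rank $\mathrm{rank}(A)-s\le r-s$, and $A_{TT}-S\succeq0$ has trace at most $2\varepsilon^2$. Therefore
\[
\|A_{T,:}\|_F^2\ge\|S\|_F^2\ge\frac{(\mathrm{tr}S)^2}{r-s}\ge\frac{t^2-4t\varepsilon^2}{r-s}\quad\text{on }G .
\]
On $G^c$, use $\|A_{T,:}\|_F^2\ge 0$.

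We have $\Pr(G^c)\le 4\mathbb{E}\varepsilon^2$, and $\mathbb{E}t^2\le(n-s)\mathbb{E}\|A_{T,:}\|_F^2$ is bounded uniformly in $M$. So Cauchy--Schwarz gives $\mathbb{E}[t\mathbf{1}_{G^c}]=O(1/M)$ and $\mathbb{E}[t\varepsilon]=O(1/M)$. Then $\mathbb{E}[t^2\mathbf{1}_G]\ge(\mathbb{E}[t\mathbf{1}_G])^2$ together with $\mathbb{E}t=n-s$ yields $\mathbb{E}\|A_{T,:}\|_F^2\ge(n-s)^2/(r-s)-O(1/M)$. This is exactly what your conclusion needs.
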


We now propose our method and demonstrate that its variance achieves the lower bound in Lemma \ref{lemma:lower}, thereby demonstraining its optimality in attaining the minimum variance.

\noindent\textbf{Construction of the Projection Matrix.} We first extract the principal basis vectors $v_i^{(1)}(i\in[1,r_1])$ via SVD, and let the projection matrix $Q_1=\left[v_1^{(1)},\cdots,v_{r_1}^{(1)}\right]$. Then we sample a random matrix  whose columns lie in the orthogonal complement of the column space of $Q_1$: \begin{equation}\label{eqn:so_prac}S_o=(I-Q_1Q_1^\top)S, \quad \text{where} \;S_{ij}\stackrel{i.i.d}{\sim} \mathcal{N}(0,1). \end{equation} Let $v_j^{(2)}(j\in [1,r_2])$ be the column vectors obtained from $\mathrm{QR}(S_o)$. By construction, the principal and random subspaces are orthogonal ($v_i^{(1)}\perp v_j^{(2)}$). The unified projection matrix $P\in \mathbb{R}^{n\times (r_1+r_2)}$ is defined as: \begin{equation}\label{eqn:R} P = \left[\underbrace{v_1^{(1)},\cdots,v_{r_1}^{(1)}}_{\mathrm{Principal}}, \underbrace{\sqrt{k}v_{1}^{(2)},\cdots,\sqrt{k}v_{r_2}^{(2)}}_{\mathrm{Random}}\right], \end{equation} 

where $k$ is a scaling factor used to ensure unbiased reconstruction. It's critical, since we only sample a subspace of dimension $r_2$ to represent the entire tail of dimension $n-r_1$, the energy of the random component must be amplified to ensure the estimator remains unbiased in expectation.

\noindent\textbf{Unbiased Estimation.} By selecting an appropriate $k$, PRAC ensures the unbiasedness.


\begin{theorem}[Unbiased Reconstruction of PRAC]\label{thm:unbiased_reconstruction} Let $P$ be the projection matrix defined in \eqref{eqn:R}. If the scaling factor is set to $k=\frac{n-r_1}{r_2}$, the reconstruction $\tilde{X}=XPP^\top$ satisfies $\mathbb{E}[\tilde{X}]=X$. \end{theorem}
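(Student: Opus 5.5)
The plan is to write $PP^\top = Q_1Q_1^\top + k\,Q_2Q_2^\top$, where $Q_2=[v_1^{(2)},\dots,v_{r_2}^{(2)}]$. Then $\tilde X = XQ_1Q_1^\top + k\,XQ_2Q_2^\top$. Since $Q_1$ is deterministic given $X$, unbiasedness reduces to the identity
\[
\mathbb{E}[Q_2Q_2^\top] = \frac{r_2}{n-r_1}\,(I-Q_1Q_1^\top).
\]
Once this holds, we get
\[
\mathbb{E}[\tilde X] = XQ_1Q_1^\top + k\frac{r_2}{n-r_1}X(I-Q_1Q_1^\top) = X
\]
exactly when $k=\frac{n-r_1}{r_2}$.

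To prove the identity, I would work in an orthonormal basis adapted to the splitting. Let $B\in\mathbb{R}^{n\times(n-r_1)}$ be a fixed orthonormal basis of the complement of $\mathrm{col}(Q_1)$, so that $I-Q_1Q_1^\top = BB^\top$. Then
\[
S_o = BB^\top S = B G, \qquad G := B^\top S \in\mathbb{R}^{(n-r_1)\times r_2}.
\]
By rotational invariance of the Gaussian, $G$ has i.i.d. $\mathcal{N}(0,1)$ entries. Because $B$ has orthonormal columns, the column space of $S_o$ equals $B\cdot\mathrm{col}(G)$, and the QR orthonormal basis satisfies $Q_2 = B\hat Q$, where $\hat Q$ is the Q-factor of $G$. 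It follows that $Q_2Q_2^\top = B\hat Q\hat Q^\top B^\top$. This step assumes $r_2\le n-r_1$, so that $G$ has full column rank almost surely.

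The remaining task is to show $\mathbb{E}[\hat Q\hat Q^\top] = \frac{r_2}{m}I_m$ with $m=n-r_1$. For any orthogonal $O\in\mathbb{R}^{m\times m}$, the matrix $OG$ has the same law as $G$. Moreover, $\hat Q\hat Q^\top$ is the orthogonal projector onto $\mathrm{col}(G)$, which transforms as $O\hat Q\hat Q^\top O^\top$ under $G\mapsto OG$. Hence $M:=\mathbb{E}[\hat Q\hat Q^\top]$ commutes with every orthogonal matrix, so $M=cI_m$. Taking traces, $\mathrm{tr}(\hat Q\hat Q^\top)=r_2$ almost surely, which gives $c=r_2/m$.

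Substituting back yields the identity and hence the theorem. The main subtlety I expect is justifying the projector equivariance. Working with the projector $\hat Q\hat Q^\top$ rather than with $\hat Q$ itself avoids sign and uniqueness issues of QR. It also requires the full-rank almost-sure argument, which ensures that $\mathrm{col}(S_o)$ is exactly $r_2$-dimensional.
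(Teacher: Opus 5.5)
Your proposal is correct and follows essentially the same route as the paper. The paper reduces the claim to $\mathbb{E}[Q_2Q_2^\top]=\frac{r_2}{n-r_1}(I_n-Q_1Q_1^\top)$ by writing $S_o=U(U^\top S)$ and identifying $Q_2=UV$ from the QR factorization of the Gaussian matrix $U^\top S$; it then uses rotational invariance, the fact that a matrix commuting with all of $\mathcal{O}(m)$ is scalar, and the trace to fix the constant. The only cosmetic difference is that you argue via equivariance of the projector $\hat Q\hat Q^\top$ rather than the distributional identity $OV\stackrel{d}{=}V$ derived from uniqueness of the thin QR factorization, which avoids the QR-uniqueness bookkeeping and makes explicit the almost-sure full-rank assumption $r_2\le n-r_1$ that the paper leaves implicit.
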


\noindent\textbf{Variance Reduction Analysis.} A key advantage of PRAC is its ability to  achieve the minimum  variance. 

\begin{theorem}[Variance Bound of PRAC]\label{thm:gradient_variance}
For $X$ admiting $(s,q)$-D condition with $s<r<n$, by choosing $r_1=s$ and $r_2 = r-r_1$, we have: 
\[
\mathbb{E}\bigl[\|XPP^\top - X\|_F^2\bigr] \leq (\frac{n-r_1}{r_2}-1) q.
\]\end{theorem}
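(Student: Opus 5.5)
The plan is to split $X$ along the principal subspace and reduce everything to a second-moment computation for a uniformly random projector on the tail space. With $r_1=s$, write $\Pi_1=Q_1Q_1^\top$ for the projector onto the top-$s$ right singular vectors, and $\Pi_\perp=I-\Pi_1$. Then $X=X\Pi_1+X\Pi_\perp$. Let $\Pi_2=Q_2Q_2^\top$ with $Q_2=[v_1^{(2)},\dots,v_{r_2}^{(2)}]$. Since the principal and random columns of $P$ in \eqref{eqn:R} are orthogonal,
\[
PP^\top=\Pi_1+k\,\Pi_2 .
\]
Also $\Pi_2\Pi_1=0$ and $\Pi_2=\Pi_2\Pi_\perp$. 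Hence
\[
XPP^\top-X=X\Pi_\perp(k\Pi_2-\Pi_\perp)=:Y(k\Pi_2-\Pi_\perp),
\]
where $Y=X\Pi_\perp$ is the tail part. Its Frobenius norm satisfies $\|Y\|_F^2=\sum_{i>s}\sigma_i^2\le q$ by Assumption \ref{ass:spectral_decay}. So the error involves only the tail, and it suffices to show
\[
\mathbb{E}\|Y(k\Pi_2-\Pi_\perp)\|_F^2=(k-1)\|Y\|_F^2 .
\]

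\textbf{Distribution of $\Pi_2$.} The columns of $S_o=\Pi_\perp S$ are i.i.d.\ standard Gaussians in the $(n-s)$-dimensional subspace $\mathrm{range}(\Pi_\perp)$. This law is invariant under orthogonal maps of that subspace. Therefore the column span of $\mathrm{QR}(S_o)$ is a uniformly random $r_2$-dimensional subspace of $\mathrm{range}(\Pi_\perp)$. Working in an orthonormal basis of that subspace identifies it with $\mathbb{R}^{n-s}$. Rotation invariance then gives $\mathbb{E}[\Pi_2]=c\,\Pi_\perp$. Taking traces, $c(n-s)=r_2$, so $c=r_2/(n-s)$. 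This is the same fact behind Theorem \ref{thm:unbiased_reconstruction}, which I take as given: with $k=(n-r_1)/r_2$ we get $\mathbb{E}[k\Pi_2]=\Pi_\perp$.

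\textbf{Second moment.} Expand the error using $\Pi_2^2=\Pi_2$ and $\Pi_2\Pi_\perp=\Pi_2$:
\[
\|Y(k\Pi_2-\Pi_\perp)\|_F^2=\mathrm{tr}\bigl(Y(k^2\Pi_2-2k\Pi_2+\Pi_\perp)Y^\top\bigr).
\]
Taking expectations,
\[
\mathbb{E}\|Y(k\Pi_2-\Pi_\perp)\|_F^2=(k^2c-2kc+1)\|Y\|_F^2=(k-1)\|Y\|_F^2,
\]
using $kc=1$. With $r_1=s$ and $k-1=\frac{n-r_1}{r_2}-1\ge 0$ (because $r_2=r-s\le n-s$), this gives
\[
\mathbb{E}\|XPP^\top-X\|_F^2\le\Bigl(\frac{n-r_1}{r_2}-1\Bigr)q .
\]
This matches Lemma \ref{lemma:lower}, since $\frac{n-s}{r-s}=\frac{n-r_1}{r_2}$.

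\textbf{Main obstacle.} The delicate step is justifying $\mathbb{E}[\Pi_2]=\frac{r_2}{n-s}\Pi_\perp$. This needs the uniformity of the random subspace inside the complement, even though the construction is Gaussian sketch, then projection, then QR. I would argue it via orthogonal invariance as above: for any orthogonal $O$ acting on $\mathrm{range}(\Pi_\perp)$ and fixing $\mathrm{range}(\Pi_1)$, $OS_o$ has the same law as $S_o$, and the QR span transforms equivariantly. So $\mathbb{E}[\Pi_2]$ commutes with all such $O$. By Schur's lemma it must therefore be a scalar multiple of $\Pi_\perp$ on that subspace.
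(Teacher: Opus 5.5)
Your proof is correct and follows essentially the same route as the paper. The paper also writes the error as $X(UU^\top - kQ_2Q_2^\top)$ with $UU^\top = I - Q_1Q_1^\top$ and expands the square by idempotence to get $\mathbb{E}[(UU^\top - kQ_2Q_2^\top)^2] = (k-1)UU^\top$. It obtains $\mathbb{E}[Q_2Q_2^\top]=\frac{r_2}{n-r_1}UU^\top$ from rotation invariance plus a Schur-type lemma (commuting with all of $\mathcal{O}(n-r_1)$ forces a scalar), then bounds $(k-1)\|X-XQ_1Q_1^\top\|_F^2$ by $(k-1)q$ with $r_1=s$. Your explicit check that $k\ge 1$ is needed for that last step is a point the paper leaves implicit.
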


Theorem \ref{thm:unbiased_reconstruction} confirms that setting $k=\frac{n-r_1}{r_2}$ yields an unbiased reconstruction of activations and, by extension, gradients (Lemma \ref{lemma:grad_est_properties}). Theorem \ref{thm:gradient_variance} demonstrates its optimality in the sense
that it achieves the minimum variance among all unbiased estimators under the degenerate condition. Ablation study in Section \ref{sec:ablation} empirically validates this theoretical finding. A comparison of the three projection methods is presented in Table \ref{tab:subspace}.

\presec
\section{PRAC for Memory-Efficient Training}\label{sec:5}
\postsec

In this section, we detail the practical implementation of PRAC for memory-efficient LLM training. We introduce dynamic and sharing strategy to minimize computational overhead. Furthermore, we present a layer-wise adaptation scheme tailored to the distinct sensitivity of different model components. Finally, we provide a complexity analysis of PRAC, focusing on activation memory reduction and the amortized computational cost.

\begin{table}[!t]
    \caption{Comparison of Estimation Unbiasedness and Variance across Projection Methods.}
    \vspace{-0.2cm}
    \label{tab:subspace} 
    \setlength{\tabcolsep}{1.5pt}
    \begin{center}
    \begin{small}
    \begin{tabular}{lcccc}
    \toprule
    Algorithm & Form & Scaling & Unbiasedness & Variance\\
    \cmidrule{1-5}
    PAC & $\tilde{X}_{\text{pac}}=(XQ_1)Q_1^\top$ & - & $\times$ & - \\
    RAC & $\tilde{X}_\text{rac}=(XQ_2)Q_2^\top$ & $n/r_2 $ & $\checkmark$ & High\\
    PRAC & $\tilde{X}_{\text{prac}}=(XP)P^\top$ & $(n-r_1)/r_2$ &$\checkmark$ & Minimum \\
    \bottomrule
    \end{tabular}
    \end{small}
    \end{center}
    \vspace{-0.2cm}
\end{table}

\subsection{PRAC for LLM Training}\label{subsec:5.1}
\noindent\textbf{Dynamic Subspace Update.} Recall that the PRAC reconstruction can be written as:
\begin{equation}
    \begin{aligned}
        XPP^\top &= X\left[\sum_{i=1}^{r_1}v_i^{(1)} (v_i^{(1)})^{\top}+k\sum_{i=1}^{r_2}v_i^{(2)} (v_i^{(2)})^{\top}\right] =XQ_1Q_1^\top + kXQ_2Q_2^\top,
    \end{aligned}
\end{equation}
where $Q_1=[v_1^{(1)},\cdots,v_{r_1}^{(1)}],Q_2=[v_1^{(2)},\cdots,v_{r_2}^{(2)}]$ are the principal and random projection matrices respectively.

Computing the exact principal subspace (via SVD) and generating orthogonal random projections (via QR) at every step incurs significant computational cost. To mitigate this, we employ a lazy update strategy. The principal and random components $Q_1,Q_2$ are updated only at fixed intervals $T_1$ and $T_2$, respectively. The rationale is that the activation statistics (and thus the related subspace) change slowly over training steps, especially after the initial warm-up phase. This lazy update mechanism drastically reduces the amortized computational overhead. The full procedure is outlined in Algorithm \ref{alg:pq}.

\noindent\textbf{Subspace Sharing.}  In Transformer architectures, many layers share the same input activation $X$ such as the Query, Key and Value projections. To cut memory usage, we enforce subspace sharing: a single set of projection matrices ($Q_1,Q_2$) and compressed activations ($XQ_1,XQ_2$) is computed and shared across these layers. Beyond memory savings, this strategy ensures that gradient updates for parallel heads reside in a consistent subspace, reducing gradient noise variance and stabilizing optimization.

\begin{algorithm}[!t]
    \caption{Dynamic Activation Compression via PRAC}
    \label{alg:pq}
    \begin{small}
    \begin{algorithmic}[1]
        \REQUIRE Input activation $X$, ranks $(r_1, r_2)$, update intervals $(T_1, T_2)$, current step $t$
        \ENSURE Projection matrices $Q_1^{(t)}$, $Q_2^{(t)}$, Compressed Activations $X_1^{(t)}$, $X_2^{(t)}$
        
        \STATE $k \gets (n - r_1)/r_2$
        
        \STATE \textit{// Update Principal Subspace:}
        \IF{$t \bmod T_1 = 0$}
            \STATE $Q_1^{(t)} \gets \text{top-}r_1\text{ right singular vectors of } X$
        \ELSE
            \STATE $Q_1^{(t)} \gets Q_1^{(t-1)}$
        \ENDIF
        
        \STATE \textit{// Update Random Subspace}
        \IF{$t \bmod T_2 = 0$}
            \STATE Sample $S \sim \mathcal{N}(0, I_{n \times r_2})$
            \STATE $Q_2^{(t)} \gets \text{orthonormal basis of } (I - Q_1^{(t)}Q_1^{(t)\top})S$
        \ELSE
            \STATE $Q_2^{(t)} \gets Q_2^{(t-1)}$
        \ENDIF
        
        \STATE \textit{// Compression}
        \STATE $X_1^{(t)} \gets XQ_1^{(t)}$, 
               $X_2^{(t)} \gets k \cdot X Q_2^{(t)}$
        
        \STATE \textbf{Return:} $Q_1^{(t)}, Q_2^{(t)}, X_1^{(t)}, X_2^{(t)}$
    \end{algorithmic}
    \end{small}
\end{algorithm}
\noindent\textbf{Layer-Wise Design.} Building on the periodic update strategy, we further tailor the compression to the specific roles of different layer types, as not all layers contribute equally to training dynamics. To this end, we adopt a layer-wise configuration tailored to the specific characteristics of linear and non-linear modules:
\begin{itemize} 
    \item \textbf{Linear Layers (MLP, Attention Projections):} Due to their large parameter scale, the updates of their weight matrices impose a higher demand on gradient quality. Therefore, more activation information needs to be retained to ensure optimization performance. During compression, we allocate a relatively higher rank to these layers (e.g., $r_1=r_2=\lfloor0.3n\rfloor$).
    \item \textbf{Non-linear Layers (LayerNorm, RMSNorm, GeLU, SiLU):} These layers typically involve element-wise operations or vector-wise scaling parameters, resulting in lower information density. Although our theoretical analysis does not directly cover all cases, we find that our method remains effective for most layers. It is worth noting that certain intermediate variables in these layers, such as the mean and variance in LayerNorm, are not compressed due to their negligible memory footprint (e.g., $bs$ vs. $bsn$). Likewise, operations like Flash Attention are excluded from PRAC compression because of their intricate coupled structure and engineering optimizations. In practice, we process all the selected non-linear layers using a lower rank ($r_1=r_2=\lfloor0.2n\rfloor$).
\end{itemize}
\begin{table*}[!t]
    \caption{Comparison with memory-efficient algorithms on pretraining tasks. Validation perplexity (PPL, lower is better) and peak memory consumption (MEM, in GB, lower is better) are reported. For methods marked out-of-memory (OOM), perplexity is measured using half the batch size specified above. $B,S,\Delta$ denote the micro-batch size, sequence length and total memory reduction respectively.}
    \vspace{-0.2cm}
    \label{pretrain_llama} 
    \setlength{\tabcolsep}{0.5pt}
    \begin{center}
    \begin{small}
    \begin{tabular}{lccccccccccccccc}
    \toprule
    &\multicolumn{3}{c}{LLaMA-130M}&\multicolumn{3}{c}{LLaMA-350M}&\multicolumn{3}{c}{LLaMA-1B}&\multicolumn{3}{c}{GPT-2-124M}&\multicolumn{3}{c}{GPT-2-355M}\\
    \cmidrule{1-16}
      Method &PPL &MEM &$\Delta$ \;\;&PPL &MEM &$\Delta$ \;\; &PPL &MEM &$\Delta$ \;\;&PPL &MEM &$\Delta$ \;\; &PPL &MEM &$\Delta$ \;\; \\
    \cmidrule{1-16}
    Baseline    &24.41&21.26 & -
             &18.77 & 46.10 & - 
             &15.33 &\multicolumn{2}{c}{OOM} 
             &19.83 & 62.27 & -
             &16.26 & 55.94 & - \\
    \cmidrule{1-16}
    GaLore   & 25.12 & 21.08 & -1\%
             & 19.65 & 45.37 & -2\% 
             & 15.64&\multicolumn{2}{c}{OOM} 
             & 21.01 & 62.15 & -0\% 
             & 17.88 & 55.64 & -1\% \\
    RSO      & 25.41 & 19.57 & -8\%
             & 19.57 & 40.36 & -13\% 
             & 15.72 & 79.32 & -16\%
             & 22.31 & 54.79 & -12\% 
             & 18.29 & 46.90 & -16\% \\
    PRAC     & \textbf{24.66} & \textbf{15.65} & \textbf{-27\%}
             & \textbf{18.92} & \textbf{32.21} & \textbf{-30\%} 
             & \textbf{15.41} & \textbf{60.48} & \textbf{-36\%}
             & \textbf{19.95} & \textbf{49.68} & \textbf{-20\%}
             & \textbf{16.35} & \textbf{44.21} & \textbf{-21\%} \\
    \cmidrule{1-16}
    B / S & \multicolumn{3}{c}{128/256} &\multicolumn{3}{c}{128/256} & \multicolumn{3}{c}{128/256} & \multicolumn{3}{c}{64/1024} &\multicolumn{3}{c}{32/1024} \\
    \bottomrule
    \vspace{-0.6cm}
    \end{tabular}
    \end{small}
    \end{center}
\end{table*}
\subsection{Memory and Computational Efficiency}
\begin{wraptable}{r}{0.5\linewidth}
    \centering
    \small
    \vspace{-0.6cm}
    \caption{Activation memory footprint comparison for the GELU and linear layer.}
    \label{tab:ffn1}
    \begin{tabular}{@{}l@{}c@{}c@{}}
        \toprule
        & \text{Baseline} & \text{PRAC}  \\
        \midrule
        $\text{GeLU \;Input}$ & $bsn$ & $bs(r_1+r_2)$\\
        $W_\text{down} \; \text{Input}$ & $bsn$ & $bs(r_1+r_2)$\\
        Total & $2bsn$ & $2bs(r_1+r_2)$ \\
        \bottomrule
    \end{tabular}
    \vspace{-0.3cm}
\end{wraptable}
\textbf{Theoretical Memory Footprint.} We analyze memory usage in the GeLU layer $A_u=\text{GeLU}(A_u^\prime)$ and linear layer $A_d^\prime = A_uW_{\text{down}}^\top$. Standard backpropagation requires storing full tensors $A_u,A_u^\prime\in\mathbb{R}^{b\cdot s \times n}$, costing $2bsn$. In contrast, PRAC projects these activations onto principal ($r_1$) and random ($r_2$) subspaces, caching only the low-dimensional projections in $\mathbb{R}^{b\cdot s \times (r_1+r_2)}$. In particular, since $P_u,P_u^\prime$ are independent of the batch size, their memory overhead is negligible. The memory usage is summarized in Table \ref{tab:ffn1}.
  

In the experiments, $r_1+r_2$ is typically set to less than $\lfloor 0.6n\rfloor$, which results in over 40\% reduction in the activations. The complete analysis for the full GPT architecture is provided in Appendix \ref{sec:complexity}.

\noindent\textbf{Computational Overhead.} The primary computational cost of PRAC arises from SVD and QR decompositions. However, by setting the update intervals $T_1$ and $T_2$ to sufficiently large values (e.g., $T_1=T_2=500$ steps), the amortized cost becomes negligible. Furthermore, the additional matrix multiplications introduced by the projection operation $XP$ incur minimal overhead compared to the memory throughput gains. The experimental results of training efficiency are shown in Section \ref{sec:pretrain}.

\presec
\section{Experiments}
\postsec
In this section, we extensively evaluate PRAC on both pre-training and fine-tuning tasks across various model architectures. PRAC consistently achieves up to 36\% memory reduction while maintaining competitive performance.

\subsection{Memory-Efficient Pre-training}\label{sec:pretrain}
\noindent\textbf{Setup.} We evaluate PRAC by pre-training LLaMA \citep{LLaMA} models (130M, 350M, 1B parameters) on the C4 dataset \citep{C4} and GPT-2 \citep{GPT-2} models (124M, 355M) on OpenWebText \citep{OpenWebText}, following the standard configurations in the nanoGPT codebase\footnote{\url{https://github.com/karpathy/nanoGPT}}. For PRAC, we configure the subspace ranks as $r_1=r_2=\lfloor0.3n\rfloor$ for linear layers and $r_1=r_2=\lfloor0.2n\rfloor$ for non-linear layers (e.g., RMSNorm, SiLU), where $n$ is the hidden dimension. We compare against two memory-efficient baselines: GaLore \citep{galore2024} and RSO \citep{rso2025}, using their reported hyperparameters. Detailed configurations are provided in Appendix \ref{sec:Pre-training Setting}. To ensure statistical robustness, all experiments are conducted over multiple runs with different random seeds, and we report the averaged results.

\begin{wrapfigure}{r}{0.5\linewidth}
\centering
\scriptsize
\vspace{-0.2cm}
\begin{tabular}{@{}c@{}c@{}}
\includegraphics[width=0.5\linewidth]{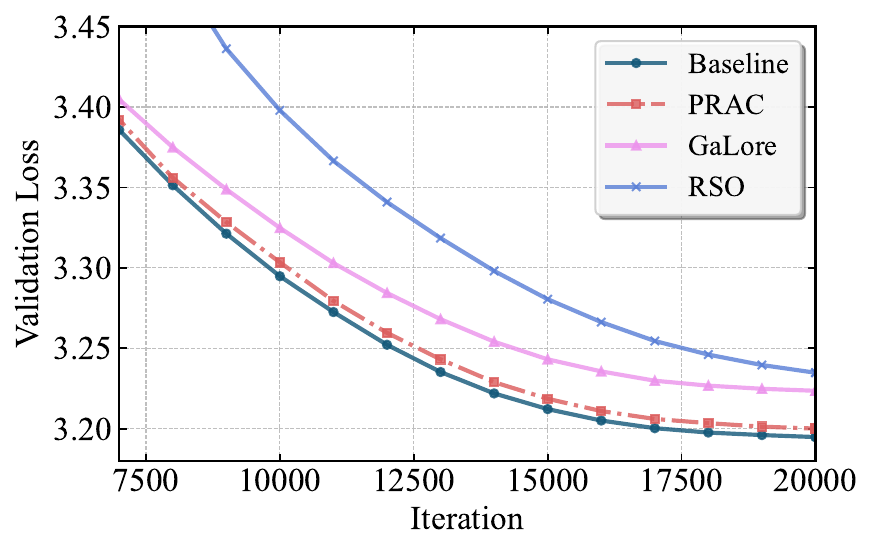} &
\includegraphics[width=0.5\linewidth]{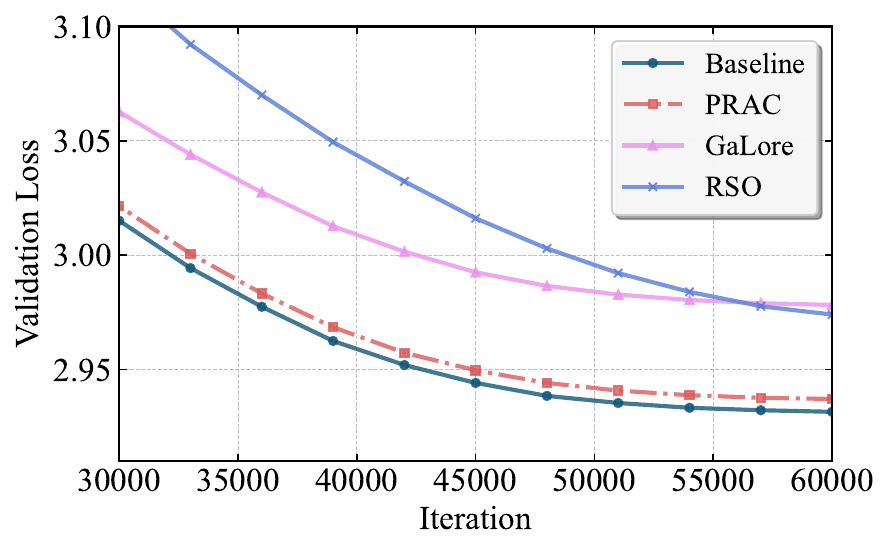} \\
(a) LLaMA 130M & (b) LLaMA 350M
\end{tabular}
\caption{Loss curves of pre-training LLaMA-130M and LLaMA-350M model.}
\label{fig:pretrain}
\vspace{-0.4cm}
\end{wrapfigure}
\noindent\textbf{Results.} Table \ref{pretrain_llama} summarizes the quantitative results. PRAC achieves the optimal trade-off between training performance and memory usage, achieving a 36\% total memory reduction for the LLaMA-1B model. Figure \ref{fig:pretrain} illustrates the validation loss trajectories. PRAC maintains competitive convergence throughout the entire training process. In contrast, GaLore (Principal-only) suffers from stagnation in the later stages due to bias, while RSO (Random-only) exhibits slower initial convergence due to high variance. In the GPT-2 experiments, we restrict the batch size to avoid OOM failures in the comparison methods; under these conditions, our method achieves 20$\times$ and 21$\times$ total memory compression for GPT-124M and 355M respectively.

\begin{wraptable}{r}{0.5\linewidth}  
    \centering
    \small
    \vspace{-0.2cm}
    \caption{Training Time Comparison on $4\times$A800 GPU for LLaMA-1B.}
    \label{tab:throughput}
    \begin{tabular}{@{}l@{}c@{}c@{}c@{}}
    \toprule
    Method & Batch Size & Memory (GB) & Training Time (h) \\
    \cmidrule{1-4}
    Baseline & 64 & 50.35 & 156 \\
    \cmidrule{1-4}
    \multirow{2}{*}{PRAC}  & 64 & 36.90 & 179 \\
    & 96 ($\bf{\uparrow 50\%}$) & 48.70 & 117 ($\bf{\downarrow 25\%}$) \\ 
    \bottomrule
    \end{tabular}
\end{wraptable}

\noindent\textbf{Training Efficiency.} To verify the practical speedup of PRAC, we measure training time and peak memory on LLaMA-1B using 4×A800 GPUs. As shown in Table \ref{tab:throughput}, PRAC reduces peak memory by approximately 27\% at a fixed batch size of 64 (to prevent OOM for the baseline). Crucially, this memory headroom allows for a larger batch size (increasing from 64 to 96), which reduces the total training time by 25\%.

\noindent\textbf{Scaling Laws.} We validate the scalability of PRAC by training a LLaMA series ranging from 35M to 1B parameters. Following Chinchilla's law \citep{scaling2022}, we set the token budget to $20\times$ the parameter count. Figure \ref{fig:scaling_law}(a) shows that PRAC tracks the AdamW baseline loss curves almost identically across all scales while using around 30\% less memory. The linear fit in Figure \ref{fig:scaling_law}(b) confirms that PRAC adheres to the power-law scaling relationship, suggesting robust performance for even larger models.

\begin{figure}[!ht]
    \scriptsize
    \setlength{\tabcolsep}{0.5pt}
    \vspace{-0.2cm}
    \begin{center}
    \begin{tabular}{ccc}
        \includegraphics[height=0.23\textwidth]{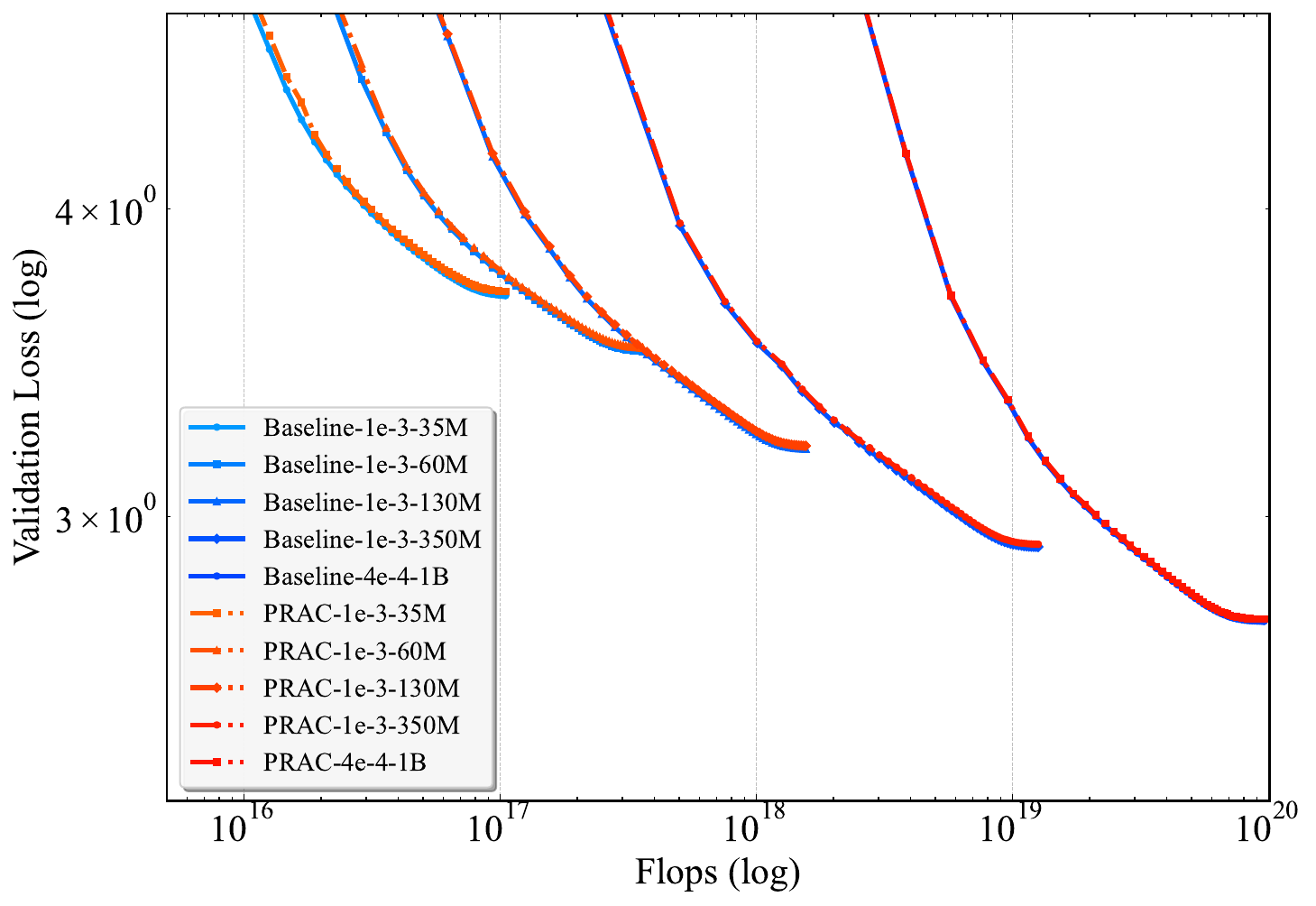}&
        \includegraphics[height=0.23\textwidth]{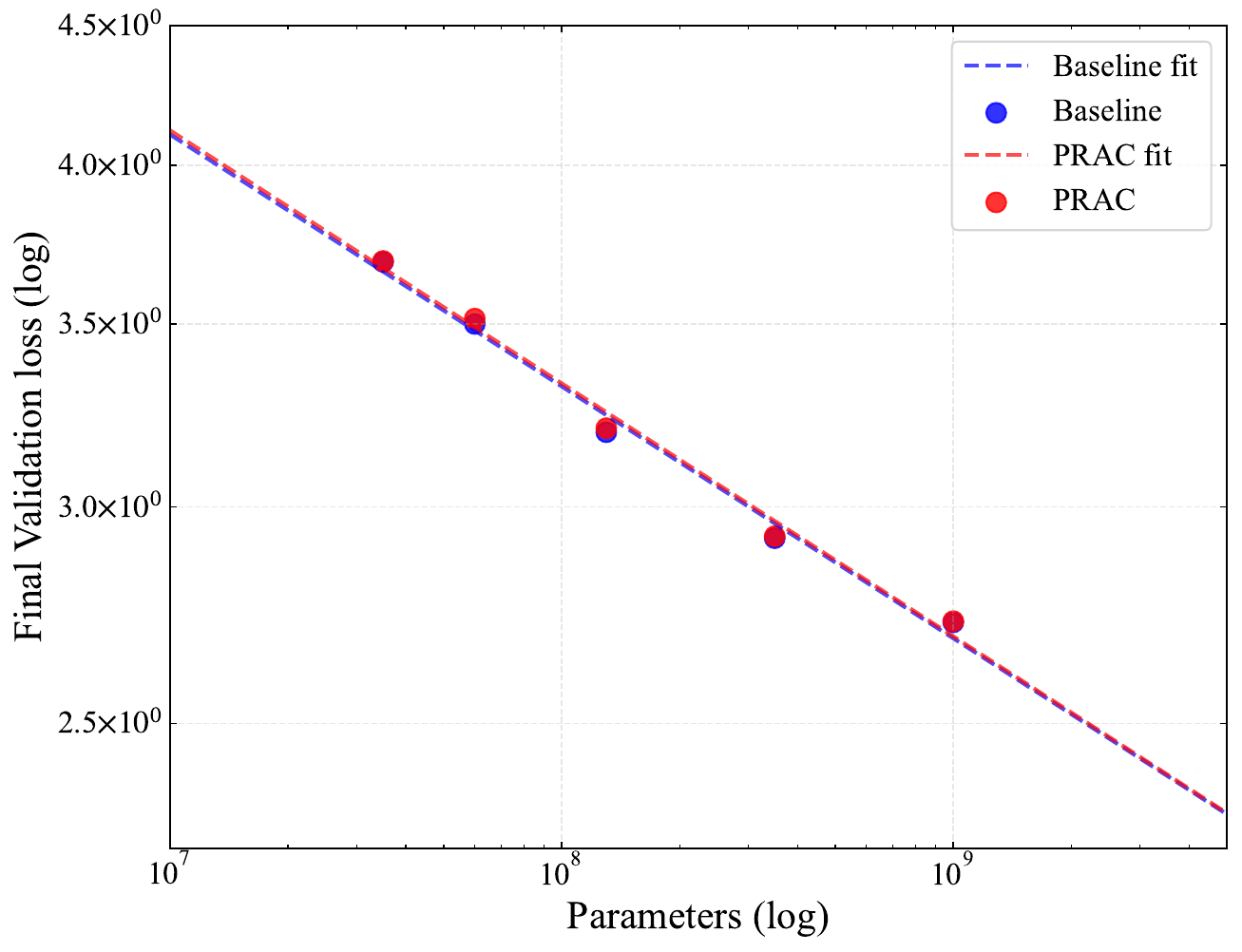}&
        \\
        (a) Scaling laws in terms of compute & (b) Scaling laws in terms of parameters\\
    \end{tabular}
    \end{center}
    \vspace{-0.4cm}
    \caption{Scaling behavior of PRAC across model sizes (35M to 1B parameters)}
    \vspace{-0.5cm}
    \label{fig:scaling_law}
\end{figure}

\begin{table*}[!t]
    \caption{Comparison with memory-efficient algorithms on fine-tuning RoBERTa models. Average scores across the GLUE benchmark and the peak memory usage of activations in MRPC task are provided.}
    \vspace{-0.8cm}
    \label{finetune_roberta}
    \setlength{\tabcolsep}{3pt}
    \begin{center}
    \begin{small}
    \begin{tabular}{l|c|cccccccc|c}
    \toprule
      Method & Memory (GB) & COLA & STSB  & MRPC & RTE & SST2 & MNLI & QNLI & QQP & AVG\\
    \cmidrule{1-11}
    Full Fine-Tuning    & 0.42 & 62.24 & 90.92 & 91.30 & 79.42 & 94.57 & 87.18 & 92.33 & 92.28 & 86.28\\
    \cmidrule{1-11}
    RSO (rank=4)     &0.29& 62.47 & 90.62 & 92.25 & 78.70 & \textbf{94.84} & \textbf{86.67} & 92.29 & 90.94 & 86.10 \\
    PRAC (rank=4)    &0.26 ($\bf{\downarrow 38\%}$)& \textbf{63.56} & \textbf{90.83} & \textbf{93.28} & \textbf{78.71} & 94.72 & 86.60 & \textbf{92.35} & \textbf{90.95} &  \textbf{86.38}\\
    \bottomrule
    \end{tabular}
    \end{small}
    \end{center}
    \vspace{-0.6cm}
\end{table*}

\subsection{Memory-Efficient Fine-tuning} We evaluate PRAC on the GLUE benchmark \citep{GLUE} by fine-tuning RoBERTa \citep{RoBERTa}. For a fair comparison with RSO (rank $r=4$), we set $r_1=r_2=2$ for PRAC. As shown in Table \ref{finetune_roberta}, PRAC not only outperforms the RSO baseline but, in certain tasks, surpasses full fine-tuning. This suggests that the random subspace component may act as a beneficial regularizer during fine-tuning by introducing stochasticity.

\subsection{Ablation Study} \label{sec:ablation}

\begin{wrapfigure}{r}{0.5\linewidth}
\centering
\scriptsize
\vspace{-0.2cm}
\begin{tabular}{@{}c@{}c@{}}
\includegraphics[width=0.5\linewidth]{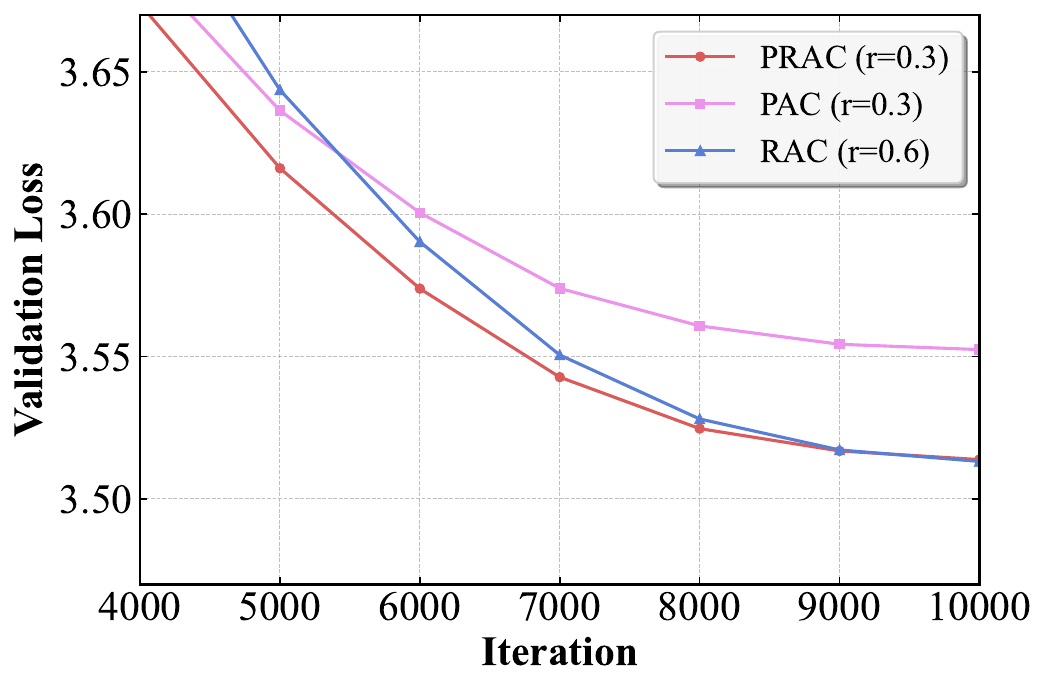}&
\includegraphics[width=0.5\linewidth]{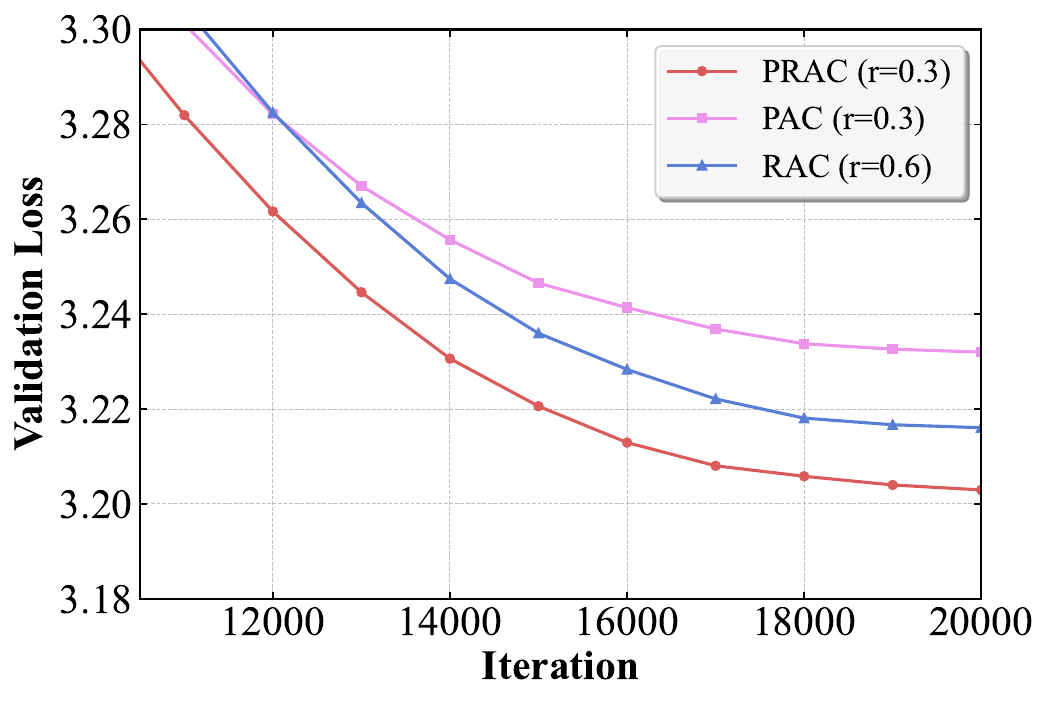}\\
(a) LLaMA 60M & (b) LLaMA 130M
\end{tabular}
\caption{Loss curve of using PRAC, PAC, RAC in LLaMA series pre-training. RAC reports the result of $r=0.6$ due to the divergence of $r=0.3$.}
\label{fig:ablation_dual}
\vspace{-0.6cm}
\end{wrapfigure}
We assess the efficacy of the hybrid projection by comparing PAC, RAC, and PRAC with equivalent total rank (using $r_1+r_2$ for PRAC). As shown in Figure \ref{fig:ablation_dual}, PAC converges slowly in the later stages, where gradient noise dominates the true gradient (Proposition \ref{proposition_1}). In contrast, RAC struggles in the early stages due to its inability to capture principal subspace properties. PRAC overcomes these limitations, achieving faster convergence throughout the entire training process and validating the advantage of combining dual subspaces.

Appendix \ref{additional result} details the ablation analysis for the scaling factor. Results indicate that the theoretical setting $k=\frac{n-r_1}{r_2}$ is effective across both linear and non-linear layers.

\presec
\section{Conclusion}
\postsec
In this work, we propose PRAC, a theoretically optimal activation compression strategy that provides unbiased estimates and leverages the low-rank structure of LLM activations to minimize estimation variance. Unlike prior subspace methods, PRAC ensures stable convergence while reducing total memory by up to 36\% across both pre-training and fine-tuning tasks. With negligible computational overhead, PRAC offers a robust and scalable solution for training large language models on memory-constrained hardware.

\presec
\section*{Impact Statement}
\postsec
This paper presents a method to substantially improve the memory and computational efficiency of LLM training. By compressing activations via principal and random subspaces, our approach reduces the hardware requirements for pre-training and fine-tuning, thereby minimizing the energy consumption and carbon emissions associated with large-scale deep learning workloads.

\vskip 0.2in
\bibliography{sample}
\newpage

\appendix

\presec
\section{Implementation of PRAC}\label{sec:implementation}
\postsec
We present the PRAC method for compressing activations in Algorithm \ref{alg:pq}, and introduce its application during forward and backward propagation in Algorithm \ref{alg:activation_recovery}.
\begin{algorithm*}[!ht]
    \caption{Activation Storage and Recovery for Backpropagation}
    \label{alg:activation_recovery}
    \begin{small}
    \begin{algorithmic}[1]
        \REQUIRE Input $X$, parameters $W$, forward function $f$, ranks ($r_1$, $r_2$), step intervals ($T_1$, $T_2$), current training step $t$
        
        \STATE \textbf{Forward Propagation:}
        \STATE Compute output: $Y \gets f(X; W)$ 
        \STATE Store compression components: 
        $\{Q_1^{(t)}, Q_2^{(t)}, \mathcal{X}_1^{(t)}, \mathcal{X}_2^{(t)}\} \gets \mathcal{C}(X, r_1, r_2, T_1, T_2, t)$ (Algorithm \ref{alg:pq})
        \STATE \textbf{Return:} $Y$
        
        \STATE
        \STATE \textbf{Backward Propagation:}
        \STATE Receive gradient w.r.t output: $\nabla_Y \mathcal{L} \gets \frac{\partial \mathcal{L}}{\partial Y}$
        \STATE Reconstruct input approximation from cache: 
        $\tilde{X} \gets \mathcal{X}_1^{(t)}Q_1^{(t)}+\mathcal{X}_2^{(t)} Q_2^{(t)}$
        \STATE Compute parameter and input gradients: 
        $\nabla_X \mathcal{L},\nabla_W \mathcal{L} \gets \text{backward}(\nabla_Y \mathcal{L}, \tilde{X}, W)$
     
        \STATE \textbf{Return:} $\nabla_W \mathcal{L}, \nabla_X \mathcal{L}$
    \end{algorithmic}
    \end{small}
\end{algorithm*}

\presec
\section{Memory Complexity Analysis}\label{sec:complexity}
\postsec
We take GPT structure as an example to show theoretical activation memory of PRAC. We assume a uniform rank ratio $r_1,r_2\geq 0$ for the principal and random subspaces, respectively. Usually, $r_1+r_2\leq 0.6$.
\begin{table}[!ht]
\centering
\caption{Comparison of activation stored in the GPT series architecture, with decreased activation values highlighted in red for PRAC.}
\label{tab:tensor_comparison_gpt}
\renewcommand{\arraystretch}{1.5}
\begin{tabular}{|c|c|c|}
\hline
\multirow{2}{*}{Operation} & \multicolumn{2}{c|}{Activation Saved} \\ \cline{2-3}
 & Baseline & PRAC $(0\leq r_1+r_2\leq 0.6)$ \\
\hline
\( X = \mathrm{LayerNorm}(\tilde{X}) \) & 
\makecell{\( \tilde{X}\in \mathbb{R}^{b \times s \times n} \)\\
          \( (\mu(\tilde{X}), \sigma(\tilde{X})^2) \in \mathbb{R}^{b \times s} \)} & 
\makecell{\( \textcolor{red}{\tilde{X} P_x\in \mathbb{R}^{b \times s \times (nr_1)},\tilde{X} Q_x\in \mathbb{R}^{b\times s\times (nr_2)}}\)\\
          \( (\mu(X), \sigma(X)^2) \in \mathbb{R}^{b \times s} \)} \\
\hline
\makecell{\( q = X W_q\ \)\\
\( k = X W_k \)\\
\( v = X W_v \)} & 
\makecell{\( X \in \mathbb{R}^{b \times s \times n} \)} & 
\makecell{\( \textcolor{red}{XP_q \in \mathbb{R}^{b\times s \times (nr_1)}} \)\\
          \( \textcolor{red}{XQ_q \in \mathbb{R}^{b\times s \times (nr_2)}} \)} \\
\hline
\makecell{reshape \( q, k, v \)\\
          to \((b, h, s, n/h)\)} & 
None & 
None \\
\hline
\( A_h = \mathrm{flash\_attn}(q, k, v) \) & 
\makecell{\( q, k, v \in \mathbb{R}^{b \times h \times s \times n/h} \)\\
          two buffers \( \in \mathbb{R}^{b \times s \times h} \)} & 
\makecell{\( q, k, v \in \mathbb{R}^{b \times h \times s \times n/h} \)\\
          two buffers \( \in \mathbb{R}^{b \times s \times h} \)} \\
\hline
reshape \( A_h \) to \((b, s, n)\) & 
None & 
None \\
\hline
\( A_o^{\prime\prime} = A_h W_o^T \) & 
\makecell{\( A_h \in \mathbb{R}^{b \times s \times n} \)} & 
\makecell{\( \textcolor{red}{A_h P_h \in \mathbb{R}^{b \times s \times (nr_1)}} \)\\
          \( \textcolor{red}{A_h Q_h \in \mathbb{R}^{b \times s \times (nr_2)}} \)} \\
\hline
\makecell{residual: \( A_o^\prime = A_o^{\prime\prime} + \tilde{X} \)} & 
None & 
None \\
\hline
\( A_o = \mathrm{LayerNorm}(A_o^\prime) \) & 
\makecell{\( A_o^\prime \in \mathbb{R}^{b \times s \times n} \)\\
          \( (\mu(A_o^\prime), \sigma(A_o^\prime)^2) \in \mathbb{R}^{b \times s} \)} & 
\makecell{\( \textcolor{red}{A_o^\prime P_o^\prime \in \mathbb{R}^{b \times s \times (nr_1)}, A_o^\prime Q_o^\prime \in \mathbb{R}^{b \times s \times (nr_2)}} \)\\
          \( (\mu(A_o^\prime), \sigma(A_o^\prime)^2) \in \mathbb{R}^{b \times s} \)} \\
\hline
\(A_{u}^\prime = A_o W_{\mathrm{up}}^T\)  & 
\makecell{\( A_o \in \mathbb{R}^{b \times s \times n} \)} & 
\makecell{\( \textcolor{red}{A_o P_o\in \mathbb{R}^{b\times s \times (nr_1)}} \)\\
          \( \textcolor{red}{A_o Q_o\in \mathbb{R}^{b\times s \times (nr_2)}} \)} \\
\hline
\( A_{u} = \mathrm{GeLU}(A_u^\prime) \) & 
\makecell{\(  A_{u}^\prime \in \mathbb{R}^{b \times s \times m} \)} & 
\makecell{\(  \textcolor{red}{A_{u}^\prime P_{u}^\prime \in \mathbb{R}^{b \times s \times (mr_1)}} \)\\ \(  \textcolor{red}{A_{u}^\prime Q_u^\prime \in \mathbb{R}^{b \times s \times (mr_2)}} \)} \\
\hline
\( A_d^\prime = A_u W_{\mathrm{down}}^T \) & 
\makecell{\( A_u \in \mathbb{R}^{b \times s \times m} \)} & 
\makecell{\( \textcolor{red}{A_u P_u \in \mathbb{R}^{b \times s \times (mr_1)}}\)\\ \( \textcolor{red}{A_u Q_u \in \mathbb{R}^{b \times s \times (mr_2)}} \)} \\
\hline
\makecell{residual: \( A_d = A_d^\prime + A_o^\prime \)} & 
None & 
None \\
\hline
\end{tabular}
\end{table}

A comparison of activation compression between PRAC and baseline methods is presented in Table \ref{tab:tensor_comparison_gpt}. The proposed PRAC compresses both linear layers (including the projection layers in Attention and the MLP) and non-linear layers (including LayerNorm and the GeLU activation). The mean and variance results in LayerNorm are not compressed, as they contribute negligibly to the total memory footprint (i.e., 
$bs\ll bsn$). Flash Attention operations are not compressed by PRAC due to their intricate coupled structure and engineering optimizations. Nevertheless, the compressions above are sufficient for PRAC to achieve an overall memory reduction of nearly 40\% in large-batch-size training tasks.

\presec
\section{Theoretical Results for PRAC}
\postsec

\subsection{Notations and Useful Lemma}
\begin{definition}[Orthogonal Group]
    For a positive integer $m$, let $M_m(\mathbb{R})$ denote the set of all $m\times m$ real matrices. The orthogonal group, denoted 
    $\mathcal{O}(m)$, is defined as:
    \begin{equation}
    \mathcal{O}(m)\coloneqq \left\{A\in M_m(\mathbb{R}) \mid A^\top A=I_m\right\}.
    \end{equation}
\end{definition}
\begin{lemma}\label{lem_1}
Let $\tilde{S}\in \mathbb{R}^{m\times k}$ be a random matrix whose entries are independent and identically distributed (i.i.d.) standard normal variables, i.e., $\tilde{S}_{ij} \stackrel{i.i.d.}{\sim} \mathcal{N}(0,1)$. Consider the (thin) QR decomposition of $\tilde{S}=VR$,
where $V\in \mathbb{R}^{m\times k}$ satisfies $V^\top V=I_k$ and $R\in \mathbb{R}^{k\times k}$ is an upper-triangular matrix with positive diagonal entries. Then for any orthogonal matrix $O\in \mathcal{O}(m)$, $OV$ and $V$ are identically distributed, i.e.
\begin{equation}
OV \stackrel{d}{=} V \;\; \forall O\in \mathcal{O}(m).
\end{equation}
\end{lemma}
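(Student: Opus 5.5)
The plan is to exploit the rotation invariance of the Gaussian matrix $\tilde S$ and then transfer it to $V$ through the uniqueness of the thin QR decomposition.

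First, fix $O\in\mathcal{O}(m)$ and consider $O\tilde S$. Each column of $\tilde S$ is an independent $\mathcal{N}(0,I_m)$ vector, and $O\,\mathcal{N}(0,I_m)=\mathcal{N}(0,OO^\top)=\mathcal{N}(0,I_m)$. Independence across columns is preserved, so $O\tilde S\stackrel{d}{=}\tilde S$. This can be phrased either through the joint density $(2\pi)^{-mk/2}\exp(-\tfrac12\|S\|_F^2)$, which is invariant because $\|OS\|_F=\|S\|_F$, or columnwise via covariances.

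Second, note that $\tilde S$ has full column rank $k$ almost surely, assuming $k\le m$. On that event, the thin QR decomposition with positive diagonal in $R$ is unique, so $V=\phi(\tilde S)$ for a measurable map $\phi$ defined on full-rank matrices. Now $O\tilde S=(OV)R$, where $(OV)^\top(OV)=V^\top V=I_k$ and $R$ is still upper triangular with positive diagonal. By uniqueness, $\phi(O\tilde S)=OV$. Therefore
\[
OV=\phi(O\tilde S)\stackrel{d}{=}\phi(\tilde S)=V,
\]
where the distributional equality uses the first step together with the fact that $\phi$ is a fixed measurable function.

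The only delicate point is justifying uniqueness and measurability of $\phi$. Uniqueness follows from the standard argument that if $V_1R_1=V_2R_2$, then $V_2^\top V_1=R_2R_1^{-1}$ is upper triangular and also orthogonal (as a map on the common column space), so it is diagonal with positive entries and hence equal to $I$. Measurability follows because Gram–Schmidt expresses $\phi$ through continuous operations on full-rank inputs, and the rank-deficient set has measure zero.
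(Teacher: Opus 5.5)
Your proof is correct and follows the same route as the paper. Like the paper, you combine rotation invariance $O\tilde S\stackrel{d}{=}\tilde S$ with uniqueness of the thin QR factorization (positive diagonal in $R$) to identify the $Q$-factor of $O\tilde S$ as $OV$. You additionally make explicit two points the paper leaves implicit: that $\tilde S$ has full column rank almost surely (for $k\le m$), and that $V=\phi(\tilde S)$ for a measurable map $\phi$, which is what justifies passing from $O\tilde S\stackrel{d}{=}\tilde S$ to $\phi(O\tilde S)\stackrel{d}{=}\phi(\tilde S)$.
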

\begin{proof}
Let $O\in\mathcal{O}(m)$ be an arbitrary fixed orthogonal matrix and set $\tilde{S}^\prime=O\tilde{S}$. Consider the QR decomposition of $\tilde{S}$ and $\tilde{S}^\prime$, we have
\begin{equation}\label{s'=v'r'}
\tilde{S}=VR,\tilde{S}^\prime=V^\prime R^\prime,
\end{equation}
where $V,V^\prime \in \mathbb{R}^{m\times k}$ satisfie $V^\top V=I_k, (V^\prime)^\top V^\prime=I_k$ and $R,R^\prime$ are upper-triangular matrices with positive diagonal entries. Since the standard normal distribution is rotationally invariant, $\tilde{S}^\prime$ has the same distribution as $\tilde{S}$, i.e., $\tilde{S}^\prime\stackrel{d}{=} \tilde{S}$ , therefore we have $V\stackrel{d}{=}V^\prime$. 
What's more,
\begin{equation}\label{s'=ovr}
\tilde{S}^\prime = O\tilde{S}=OVR.
\end{equation}
Comparing (\ref{s'=v'r'}) and (\ref{s'=ovr}), from the uniqueness of the thin QR decomposition we have $V^\prime = OV,R^\prime =  R$, then
\begin{equation}
V^\prime = OV \stackrel{d}{=}V \;\; \forall O\in \mathcal{O}(m).
\end{equation}
\end{proof}
\begin{lemma}\label{lem_2}
Suppose the matrix $M\in \mathbb{R}^{m\times m}$ satisfies $MO=OM \;\; \forall O\in\mathcal{O}(m)$, then $M$ must be a scalar matrix (i.e., $M=cI_m$).
\end{lemma}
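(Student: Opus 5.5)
The plan is to test $M$ against a small, well-chosen family of orthogonal matrices. First I force $M$ to be diagonal, and then I force its diagonal entries to be equal. Since the hypothesis $MO=OM$ holds for every $O\in\mathcal{O}(m)$, it suffices to use signed diagonal matrices and coordinate permutation matrices, both of which lie in $\mathcal{O}(m)$.

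\textbf{Step 1: $M$ is diagonal.} For each index $i$, let $D_i=I_m-2e_ie_i^\top$ be the reflection that flips the sign of the $i$-th coordinate; it is orthogonal. Comparing the $(i,j)$ entries of $MD_i$ and $D_iM$ for $j\neq i$ gives:
\begin{itemize}
\item $(MD_i)_{ij}=M_{ij}$, because right-multiplication by $D_i$ only negates column $i$;
\item $(D_iM)_{ij}=-M_{ij}$, because left-multiplication by $D_i$ negates row $i$.
\end{itemize}
Hence $M_{ij}=-M_{ij}$, so $M_{ij}=0$. Repeating this for every $i$ kills all off-diagonal entries, so $M=\mathrm{diag}(d_1,\dots,d_m)$.

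\textbf{Step 2: the diagonal entries agree.} For $i\neq j$, let $\Pi_{ij}$ be the permutation matrix that swaps coordinates $i$ and $j$; it is also orthogonal. For a diagonal $M$ we have $\Pi_{ij}M\Pi_{ij}^\top=\mathrm{diag}(d_{\pi(1)},\dots,d_{\pi(m)})$, the diagonal with entries $i$ and $j$ exchanged. Commutation gives $\Pi_{ij}M\Pi_{ij}^\top=M$, which forces $d_i=d_j$. Therefore all $d_i$ equal a common value $c$, and $M=cI_m$.

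There is no real obstacle here; the argument is elementary. The only point that needs care is the index bookkeeping in Step 1, namely which side of the product negates rows and which negates columns. The case $m=1$ is trivial, since every $1\times1$ matrix is already scalar. As an alternative route, one could note that each eigenspace of $M$ must be invariant under all of $\mathcal{O}(m)$, which acts irreducibly on $\mathbb{R}^m$. That approach requires first handling real matrices that lack real eigenvalues, so the explicit computation above is cleaner and is the one I would write.
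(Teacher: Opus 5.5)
Your proposal is correct and follows the same route as the paper. The paper first uses the coordinate reflections $J_i=\mathrm{diag}(1,\dots,-1,\dots,1)$ to kill the off-diagonal entries by comparing the $(i,j)$ entries of $J_iM$ and $MJ_i$, and then conjugates by the transposition matrices $P_{ij}$ to equate the diagonal entries. Your index bookkeeping is right and is stated more cleanly than in the paper, which writes $J_k$ where $J_i$ is meant.
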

\begin{proof}
    We construct the reflection matrix as follows:
\begin{equation}
J_i=\mathrm{diag}(1,\cdots,1,\underbrace{-1}_i,1,\cdots,1)\in\mathbb{R}^{m\times m},
\end{equation}
where only the $i$-th diagonal entry is -1 and the others are 1. Clearly, $J_i\in \mathcal{O}(m)$. From the commutativity condition 
$J_k$, compare the entries in the $i$-th row and $j$-th column (with 
$i\neq j$), we have
\begin{equation}
    (J_i M)_{ij}=-M_{ij}, (MJ_k)_{ij}=M_{ij}.
\end{equation}
Equating the two expressions gives:
\[
    -M_{ij}=M_{ij} \Rightarrow M_{ij}=0 \;(i\neq j).
\]
Therefore, $M$ is a diagonal matrix. Denote it as:
\begin{equation}
    M=\mathrm{diag}(\lambda_1,\lambda_2,\cdots,\lambda_m).
\end{equation}
Then consider a permutation matrix $P_{ij}$ that swaps the $i$-th and $j$-th rows (and columns). It is also orthogonal, with $P_{ij}^{-1}=P_{ij}^\top=P_{ij}$. From the commutativity condition $P_{ij}M=MP_{ij}$, we have:
\begin{equation}
    M = P_{ij}MP_{ij}.
\end{equation}
Conjugating $M$ by $P_{ij}$ swaps the $i$-th and $j$-th diagonal entries of $M$. Since the matrices $M$ and $P_{ij}MP_{ij}$ are equal, their diagonal entries must coincide, giving $\lambda_i=\lambda_j$ for any pair $i,j$. Hence, $M$ must be a scalar matrix (i.e., $M=cI_m$).
\end{proof}
\begin{lemma}\label{lem_3}
    Suppose the random matrix $V\in \mathbb{R}^{m\times k}$ satisfies $V^\top V=I_k$ and $\forall O\in \mathcal{O}(m),OV\stackrel{d}{=}V$. Then we have 
    \begin{equation}
    \mathbb{E}[VV^\top]=\frac{k}{m}I_m.
    \end{equation}
\end{lemma}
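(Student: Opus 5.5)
The plan is to set $M \coloneqq \mathbb{E}[VV^\top]\in\mathbb{R}^{m\times m}$, show that $M$ commutes with every orthogonal matrix, and then invoke Lemma \ref{lem_2} to conclude $M=cI_m$. The constant $c$ will be pinned down by a trace computation. The expectation exists because $\|VV^\top\|_F^2=\mathrm{tr}(VV^\top VV^\top)=\mathrm{tr}(V^\top V)=k$ is bounded, so every entry of $VV^\top$ is integrable.

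First, fix $O\in\mathcal{O}(m)$. By hypothesis $OV\stackrel{d}{=}V$, so $OVV^\top O^\top=(OV)(OV)^\top$ has the same distribution as $VV^\top$. Taking expectations, which is linear and commutes with fixed left and right matrix multiplication, gives
\[
OMO^\top = \mathbb{E}[OVV^\top O^\top]=\mathbb{E}[VV^\top]=M.
\]
Right-multiplying by $O$ and using $O^\top O=I_m$ yields $OM=MO$. Since $O$ was arbitrary, Lemma \ref{lem_2} applies and gives $M=cI_m$ for some scalar $c$.

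Second, compute the trace. Since $V^\top V=I_k$ holds almost surely, $\mathrm{tr}(VV^\top)=\mathrm{tr}(V^\top V)=k$ almost surely, so $\mathrm{tr}(M)=\mathbb{E}[\mathrm{tr}(VV^\top)]=k$. On the other hand $\mathrm{tr}(cI_m)=cm$, hence $c=k/m$ and $\mathbb{E}[VV^\top]=\frac{k}{m}I_m$.

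There is no real obstacle here. The only points needing care are that equality in distribution transfers through the measurable map $V\mapsto VV^\top$, and that the expectation is finite, which the boundedness above guarantees. The substantive work, the commutant characterization, is already done in Lemma \ref{lem_2}.
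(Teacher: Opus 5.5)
Your proposal is correct and follows the paper's proof essentially step for step. Both derive $O\,\mathbb{E}[VV^\top]O^\top=\mathbb{E}[VV^\top]$ from $OV\stackrel{d}{=}V$, apply Lemma \ref{lem_2} to get a scalar matrix, and use $\mathrm{tr}(VV^\top)=\mathrm{tr}(V^\top V)=k$ to fix $c=k/m$; your integrability remark (boundedness of $VV^\top$) is a small rigor point the paper leaves implicit.
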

\begin{proof}
For any orthogonal matrix $O\in \mathcal{O}(m)$, $OV$ has the same distribution as $V$. Then, 
\[
\mathbb{E}[VV^\top]=\mathbb{E}[(OV)(OV)^\top]=O\mathbb{E}[VV^\top]O^\top,
\]
due to $O^\top =O^{-1}$,
\[
\mathbb{E}[VV^\top]O=O\mathbb{E}[VV^\top] \;\; \forall O\in\mathcal{O}(m).
\]

From Lemma \ref{lem_2}, the expectation $\mathbb{E}[VV^\top]$ of the random matrix must be a scaler matrix, i.e., 
\begin{equation}
\mathbb{E}[VV^\top]=cI_{m}.
\end{equation}
The constant $c$ is determined by computing the trace:
\begin{equation}
    \mathrm{tr}(\mathbb{E}[VV^\top])=\mathbb{E}[\mathrm{tr}(VV^\top)]=\mathbb{E}[\mathrm{tr}(V^\top V)]=k.
\end{equation}
Meanwhile, $\mathrm{tr}(cI_m)=cm$, thus $cm=k\Rightarrow c=\frac{k}{m}$.
\end{proof}
\begin{lemma}\label{lem_4}
    Suppose $Q_1\in \mathbb{R}^{n\times r_1}$ be a fixed orthogonal matrix, random matrix $S_o$ be generated by (\ref{eqn:so_prac}), and let the random matrix $Q_2\in \mathbb{R}^{n\times r_2}$ be the first component of $\text{QR}(S_o)$, then we have:
    \begin{equation}
    \mathbb{E}[Q_2Q_2^\top]=\frac{r_2}{n-r_1}(I_n-Q_1Q_1^\top).
    \end{equation}
\end{lemma}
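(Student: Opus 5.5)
The plan is to reduce the statement to Lemma \ref{lem_3}, applied inside the $(n-r_1)$-dimensional orthogonal complement of $\mathrm{col}(Q_1)$. Complete $Q_1$ to an orthogonal matrix $[Q_1, Q_\perp]\in\mathcal{O}(n)$, where $Q_\perp\in\mathbb{R}^{n\times(n-r_1)}$ has orthonormal columns spanning the complement, so that $I_n-Q_1Q_1^\top=Q_\perp Q_\perp^\top$. Then
\[
S_o=(I-Q_1Q_1^\top)S=Q_\perp (Q_\perp^\top S)=Q_\perp \tilde{S},\qquad \tilde{S}\coloneqq Q_\perp^\top S\in\mathbb{R}^{(n-r_1)\times r_2}.
\]
The rows of $Q_\perp^\top$ are orthonormal and the columns of $S$ are i.i.d.\ $\mathcal{N}(0,I_n)$, so the columns of $\tilde{S}$ are i.i.d.\ $\mathcal{N}(0,I_{n-r_1})$. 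Hence $\tilde S$ has i.i.d.\ standard normal entries. Implicitly we assume $r_2\le n-r_1$, so that $\tilde S$ has full column rank almost surely.

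Next, relate the QR factorizations. Let $\tilde S=VR$ be the thin QR decomposition, with $R$ having positive diagonal. Then $S_o=(Q_\perp V)R$, and $(Q_\perp V)^\top(Q_\perp V)=V^\top Q_\perp^\top Q_\perp V=I_{r_2}$. The thin QR decomposition with positive-diagonal $R$ is unique, so $Q_2=Q_\perp V$ almost surely. It follows that $Q_2Q_2^\top=Q_\perp VV^\top Q_\perp^\top$.

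Now apply Lemma \ref{lem_1} with $m=n-r_1$ and $k=r_2$: $V$ satisfies $OV\stackrel{d}{=}V$ for every $O\in\mathcal{O}(n-r_1)$. Lemma \ref{lem_3} then gives $\mathbb{E}[VV^\top]=\frac{r_2}{n-r_1}I_{n-r_1}$. By linearity,
\[
\mathbb{E}[Q_2Q_2^\top]=\frac{r_2}{n-r_1}Q_\perp Q_\perp^\top=\frac{r_2}{n-r_1}(I_n-Q_1Q_1^\top).
\]

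The main obstacle is the identification $Q_2=Q_\perp V$. It needs three things: that $S_o$ has full column rank almost surely, which holds because $\tilde S$ is Gaussian with $r_2\le n-r_1$; the uniqueness of thin QR under the positive-diagonal convention; and the fact that the QR of $S_o$ in $\mathbb{R}^n$ is really the QR of $\tilde S$ embedded by the isometry $Q_\perp$. Whatever convention is chosen for $Q_\perp$ does not matter, since the final expression $Q_\perp Q_\perp^\top$ is basis-independent.
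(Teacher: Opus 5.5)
Your proposal is correct and follows the paper's proof essentially step for step: the paper likewise writes $S_o=UU^\top S=U\tilde S$ with $\tilde S=U^\top S$, uses uniqueness of the thin QR decomposition to identify $Q_2=UV$, and concludes via Lemma \ref{lem_3}. Your explicit appeal to Lemma \ref{lem_1}, which verifies the rotational-invariance hypothesis of Lemma \ref{lem_3}, and your full-rank caveat $r_2\le n-r_1$ fill in details the paper leaves implicit.
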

\begin{proof}
    Let $U\in \mathbb{R}^{n\times (n-r_1)}$ be an orthonormal basis for the orthogonal complement of $P$:
    \begin{equation}
    U^\top U= I_{n-r_1}, Q_1^\top U =O_{r_1\times(n-r_1)}, Q_1Q_1^\top+UU^\top = I_n,
    \end{equation}
    then
    \begin{equation}
    S_o = S-Q_1Q_1^\top S = UU^\top S.
    \end{equation}
    We define $\tilde{S}=U^\top S\in \mathbb{R}^{(n-r_1)\times r_2}$, then $S_o=U\tilde{S}$. Since $U$ has orthonormal columns and the entries of $S$ are i.i.d Gaussian, the entries of $\tilde{S}$ are also i.i.d Gaussian. Perform QR decomposition on $\tilde S$: $\tilde{S}=VR$, then
    \begin{equation}
    S_o=U\tilde{S}=(UV)R,
    \end{equation}
    according to the uniqueness of the thin QR decomposition, the projection matrix in the algorithm satisfies $Q_2=UV\in\mathbb{R}^{n\times r_2}$.
    Then, 
    \begin{equation}
        \begin{aligned}
            \mathbb{E}[Q_2Q_2^\top]=\mathbb{E}[UVV^\top U^\top]&=U\mathbb{E}[VV^\top ]U^\top \\
            &\stackrel{(a)}{=}U(\frac{r_2}{n-r_1}I_{n-r_1})U^\top\\
            &=\frac{r_2}{n-r_1}(I_n-Q_1Q_1^\top)
        \end{aligned}
    \end{equation}
    where the equality (a) holds according to Lemma \ref{lem_3} and $V\in \mathbb{R}^{(n-r_1)\times r_2}$.
\end{proof}

\subsection{Omitted Proofs in Section \ref{sec:proposed_prac}}\label{omit_2}

\begin{theorem}[Unbiased Reconstruction of PRAC]\label{thm:unbiased_reconstruction_} Let $P\in \mathbb{R}^{n\times r_1}$ be the projection matrix defined in \eqref{eqn:R}. If the scaling factor is set to $k=\frac{n-r_1}{r_2}$, the reconstruction $\tilde{X}=XPP^\top$ satisfies $\mathbb{E}[\tilde{X}]=X$. \end{theorem}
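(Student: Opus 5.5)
The plan is to expand $PP^\top$ using the block structure of $P$ in \eqref{eqn:R} and then take expectations with Lemma \ref{lem_4}. Since the columns of $P$ are $v_1^{(1)},\dots,v_{r_1}^{(1)}$ followed by $\sqrt{k}\,v_1^{(2)},\dots,\sqrt{k}\,v_{r_2}^{(2)}$, we get
\[
PP^\top = Q_1Q_1^\top + k\,Q_2Q_2^\top,
\]
with $Q_1=[v_1^{(1)},\dots,v_{r_1}^{(1)}]$ and $Q_2=[v_1^{(2)},\dots,v_{r_2}^{(2)}]$, exactly as in the dynamic-update display of Section \ref{subsec:5.1}. Hence $\tilde X = XQ_1Q_1^\top + k\,XQ_2Q_2^\top$.

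Next, observe what is random. $Q_1$ is the matrix of the top-$r_1$ right singular vectors of $X$, so it is deterministic given $X$. The only randomness is the Gaussian matrix $S$ in \eqref{eqn:so_prac}, which enters through $Q_2$. Linearity of expectation therefore gives
\[
\mathbb{E}[\tilde X] = XQ_1Q_1^\top + k\,X\,\mathbb{E}[Q_2Q_2^\top].
\]
Lemma \ref{lem_4} applies directly here, because $Q_1$ is a fixed matrix with orthonormal columns and $Q_2$ is the orthonormal factor of $\mathrm{QR}(S_o)$. It yields $\mathbb{E}[Q_2Q_2^\top]=\frac{r_2}{n-r_1}(I_n-Q_1Q_1^\top)$.

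Substituting $k=\frac{n-r_1}{r_2}$, the factors cancel:
\[
\mathbb{E}[\tilde X]=XQ_1Q_1^\top + X(I_n-Q_1Q_1^\top)=X.
\]

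The only delicate point is conditioning. Lemma \ref{lem_4} is stated for a fixed $Q_1$, so the expectation must be read as conditional on $X$, which fixes $Q_1$. This is legitimate because $S$ is drawn independently of $X$. If $X$ has repeated singular values, $Q_1$ is not unique, but any fixed choice of it works, so this causes no difficulty.

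A second technicality is that the thin QR of $S_o$ is well-defined and unique, which requires $S_o$ to have full column rank $r_2$. Lemma \ref{lem_4} already assumes this implicitly. It holds almost surely when $r_2\le n-r_1$, since $U^\top S$ is then an $(n-r_1)\times r_2$ Gaussian matrix. I will state $r_2\le n-r_1$ as a standing assumption, and then the proof is the short computation above.
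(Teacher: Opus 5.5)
Your proposal is correct and follows the paper's proof essentially step for step: you expand $PP^\top = Q_1Q_1^\top + kQ_2Q_2^\top$, pull the deterministic $Q_1$ term out of the expectation, and apply Lemma~\ref{lem_4} so that $k\cdot\frac{r_2}{n-r_1}=1$ cancels. Your version writes the correct projector $I_n - Q_1Q_1^\top$, where the paper's display has the typo $I_n - Q_2Q_2^\top$. Your remarks on conditioning on $X$ and on needing $r_2\le n-r_1$ for an almost surely well-defined thin QR are sound, and the paper leaves both implicit.
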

\begin{proof}
    The expectations for the reconstruction $\tilde{X}$ are as follows:
    \begin{equation}
        \begin{aligned}
            \mathbb{E}[\tilde{X}]=\mathbb{E}[XQ_1Q_1^T+kXQ_2Q_2^\top]&=XQ_1Q_1^\top+kX\mathbb{E}[Q_2Q_2^\top]\\
            &\stackrel{(a)}{=}XQ_1Q_1^\top+k\frac{r_2}{n-r_1}X(I_n-Q_2Q_2^\top)\\
            &=XQ_1Q_1^\top+\frac{n-r_1}{r_2}\frac{r_2}{n-r_1}X(I_n-Q_2Q_2^\top)\\
            &=X,
        \end{aligned}
    \end{equation}
    where in (a) we use Lemma \ref{lem_4} to replace the $\mathbb{E}[Q_2Q_2^\top]$ term.
\end{proof}

\begin{theorem}[Unbiased Reconstruction of RAC]\label{thm:unbiased_reconstruction_of_rac_}
Let $Q_2\in \mathbb{R}^{n\times r_2}$ sample uniformly from the Stiefel manifold. Set the scaling factor $k=\frac{n}{r_2}$. Then the reconstruction $\tilde{X}_{\text{rac}}=(k XQ_2)Q_2^\top$ satisfies $\mathbb{E}[\tilde{X}_{\text{rac}}]=X$.
\end{theorem}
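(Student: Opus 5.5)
The plan is to reduce the statement to Lemma \ref{lem_3}, just as Theorem \ref{thm:unbiased_reconstruction_} was reduced to Lemma \ref{lem_4}. By linearity of expectation, and because $X$ and the scalar $k$ are deterministic,
\[
\mathbb{E}[\tilde{X}_{\text{rac}}]=\mathbb{E}[kXQ_2Q_2^\top]=kX\,\mathbb{E}[Q_2Q_2^\top].
\]
So everything comes down to computing the second-moment matrix $\mathbb{E}[Q_2Q_2^\top]$ of a uniformly random point of $\mathrm{St}_{n,r_2}$.

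Next I will check that $Q_2$ satisfies the hypotheses of Lemma \ref{lem_3} with $m=n$ and $k=r_2$. The first hypothesis, $Q_2^\top Q_2=I_{r_2}$, holds by definition of the Stiefel manifold.

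The second hypothesis is rotation invariance, $OQ_2\stackrel{d}{=}Q_2$ for every $O\in\mathcal{O}(n)$. If "uniform on the Stiefel manifold" is read as the Haar (invariant) measure, this is immediate. If it is read via the paper's construction (QR of an $n\times r_2$ matrix with i.i.d.\ $\mathcal{N}(0,1)$ entries, as in the RAC description), it is exactly Lemma \ref{lem_1} with $m=n$ and $k=r_2$. Either way, Lemma \ref{lem_3} then gives $\mathbb{E}[Q_2Q_2^\top]=\frac{r_2}{n}I_n$.

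Substituting, $\mathbb{E}[\tilde{X}_{\text{rac}}]=\frac{n}{r_2}\cdot\frac{r_2}{n}X=X$, which completes the argument. An equivalent route is to view RAC as PRAC with $r_1=0$ (an empty $Q_1$): then Lemma \ref{lem_4} and Theorem \ref{thm:unbiased_reconstruction_} apply directly with scaling $(n-0)/r_2$.

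The only step needing care is the rotation invariance. It requires identifying "uniform sampling" with either the Haar measure or the Gaussian-QR construction. Lemma \ref{lem_1} covers the latter, and its proof relies on uniqueness of the thin QR factorization with positive diagonal, which holds almost surely because a Gaussian matrix has full column rank with probability one. I do not expect any other obstacle.
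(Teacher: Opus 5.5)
Your proof is correct and is essentially the paper's argument. The paper just invokes Theorem \ref{thm:unbiased_reconstruction_} with $r_1=0$, which via Lemma \ref{lem_4} (with $U$ an $n\times n$ orthogonal matrix) reduces to your direct use of Lemmas \ref{lem_1} and \ref{lem_3}; you also note that specialization yourself as an equivalent route.
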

\begin{proof}
This result follows as a special case of Theorem \ref{thm:unbiased_reconstruction_} by setting $r_1=0$ and then $k = n/r_2$.
\end{proof}

\begin{theorem}[Variance Bound of PRAC]\label{thm:gradient_variance_} Under the conditions of Theorem \ref{thm:unbiased_reconstruction_}, the reconstruction error variance is bounded by: \[
\mathbb{E}\bigl[\|\tilde{X} - X\|_F^2\bigr] \leq (\frac{n-r_1}{r_2} - 1)\,\bigl\|X - XQ_1Q_1^{\top}\bigr\|_F^2.
\]\end{theorem}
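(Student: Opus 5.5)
Write $\Pi_1 = Q_1Q_1^\top$, $\Pi_\perp = I_n - \Pi_1$ and $R := X\Pi_\perp = X - XQ_1Q_1^\top$. The plan is to reduce the error to a quadratic form in the random projector $Q_2Q_2^\top$ and then use only first-moment information, namely Lemma \ref{lem_4}.

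Since the random subspace is built inside the orthogonal complement, $Q_2 = \Pi_\perp Q_2$. Hence $XQ_2Q_2^\top = RQ_2Q_2^\top$. The reconstruction is $\tilde X = X\Pi_1 + kXQ_2Q_2^\top$, so
\[
\tilde X - X = kRQ_2Q_2^\top - R .
\]
Expanding the Frobenius norm gives
\[
\|\tilde X - X\|_F^2 = k^2\,\mathrm{tr}\bigl(RQ_2Q_2^\top Q_2Q_2^\top R^\top\bigr) - 2k\,\mathrm{tr}\bigl(RQ_2Q_2^\top R^\top\bigr) + \|R\|_F^2 .
\]
Because $Q_2^\top Q_2 = I_{r_2}$, the matrix $Q_2Q_2^\top$ is idempotent. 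The first two traces therefore coincide, and the error collapses to
\[
(k^2 - 2k)\,\mathrm{tr}\bigl(RQ_2Q_2^\top R^\top\bigr) + \|R\|_F^2 .
\]

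Now take expectations; the expectation is linear in $Q_2Q_2^\top$, and $Q_1$ (hence $R$) is deterministic given $X$. By Lemma \ref{lem_4},
\[
\mathbb{E}\,\mathrm{tr}\bigl(RQ_2Q_2^\top R^\top\bigr) = \frac{r_2}{n-r_1}\,\mathrm{tr}\bigl(R\Pi_\perp R^\top\bigr) = \frac{1}{k}\,\|R\|_F^2 ,
\]
using $R\Pi_\perp = R$ and $k = (n-r_1)/r_2$. Substituting,
\[
\mathbb{E}\|\tilde X - X\|_F^2 = \Bigl(\frac{k^2 - 2k}{k} + 1\Bigr)\|R\|_F^2 = (k-1)\,\|R\|_F^2 ,
\]
which is the claimed bound, and in fact holds with equality.

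The main point needing care is the idempotence step. It relies on $Q_2$ having exactly orthonormal columns and lying in $\mathrm{range}(\Pi_\perp)$. Both follow from the QR construction in \eqref{eqn:so_prac}, as established in the proof of Lemma \ref{lem_4} via $Q_2 = UV$, provided $r_2 \le n - r_1$ (true almost surely for Gaussian $S$).

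Theorem \ref{thm:gradient_variance} then follows immediately. With $r_1 = s$, $Q_1$ holds the top-$s$ right singular vectors, so $\|R\|_F^2 = \sum_{i>s}\sigma_i^2 \le q$ by Assumption \ref{ass:spectral_decay}.
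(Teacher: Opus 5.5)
Your proof is correct and is essentially the paper's argument. The paper writes the error as $X(UU^\top - kQ_2Q_2^\top)$, which is your $kRQ_2Q_2^\top - R$ with $R = XUU^\top$. It then uses $UU^\top Q_2 = Q_2$ and the idempotence of $Q_2Q_2^\top$ to reduce the second moment to a first moment, and applies Lemma~\ref{lem_4} to obtain exactly $(k-1)\|X - XQ_1Q_1^\top\|_F^2$, with equality as you note. One small wording fix: $r_2 \le n-r_1$ is a dimensional hypothesis, not an almost-sure event; what holds almost surely under it is that $(I_n - Q_1Q_1^\top)S$ has full column rank, so the QR factor has orthonormal columns.
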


\begin{proof}
    We evaluate the expected squared Frobenius norm of the reconstruction error,
    \begin{equation}
        \begin{aligned}
            \mathbb{E}[\|X - \tilde{X}\|_F^2] &= \mathbb{E}[\|X(I_n - Q_1Q_1^\top - k Q_2Q_2^\top)\|_F^2].
        \end{aligned}
    \end{equation}
    Recall from the properties of $Q_1$ that $I_n - Q_1Q_1^\top = UU^\top$. Substituting this relation, we obtain
    \begin{equation}\label{ex-x'2}
        \begin{aligned}
        \mathbb{E}[\|X - \tilde{X}\|_F^2] = \mathbb{E}[\|X(UU^\top - k Q_2Q_2^\top)\|_F^2]= \mathbb{E}[\operatorname{tr}(X(UU^\top - k Q_2Q_2^\top)^2 X^\top)],
        \end{aligned}
    \end{equation}
    where the last equality follows from the identity $\|A\|_F^2 = \operatorname{tr}(AA^\top)$. Then, we compute the expectation of the squared matrix term:
    \begin{equation}
        \mathbb{E}\left[(UU^\top-kQ_2Q_2^\top)^2\right]=UU^\top-2k\mathbb{E}[Q_2Q_2^\top]+k^2\mathbb{E}[Q_2Q_2^\top]
    \end{equation}
    From Lemma \ref{lem_3}, we have $\mathbb{E}[Q_2Q_2^\top] = \frac{1}{k} UU^\top$. Substituting this yields
    \begin{equation}\label{euu-kqqt2}
        \mathbb{E}\left[(UU^\top-kQ_2Q_2^\top)^2\right]=UU^\top(1-2k\frac{1}{k}+k^2\frac{1}{k})=UU^\top(k-1),
    \end{equation}
    Substituting (\ref{euu-kqqt2}) into (\ref{ex-x'2}), we have:
    \begin{equation}\label{ex-x'2-final}
        \mathbb{E}[\|X-\tilde{X}\|_F^2]=\mathrm{tr}(XUU^\top X^\top)(k-1)=(\frac{n-r_1}{r_2}-1)\|X-XQ_1Q_1^\top\|_F^2
    \end{equation}
\end{proof}

\begin{theorem}[Variance Bound of RAC]\label{thm:gradient_variance_of_rac_} Under the conditions of Theorem \ref{thm:unbiased_reconstruction_of_rac_}, then the reconstruction error variance is bounded by: \[
\mathbb{E}\bigl[\|\tilde{X}_{\text{rac}} - X\|_F^2\bigr] \leq (\frac{n}{r_2} - 1)\,\|X\|_F^2.
\]\end{theorem}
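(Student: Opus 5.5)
The plan is to specialize the argument of Theorem \ref{thm:gradient_variance_} to the case $r_1=0$, exactly as Theorem \ref{thm:unbiased_reconstruction_of_rac_} was obtained from Theorem \ref{thm:unbiased_reconstruction_}. When $r_1=0$ there is no principal block. The matrix $Q_1$ is empty, so $Q_1Q_1^\top=0$ and the orthogonal complement basis $U$ may be taken to be $I_n$, with $UU^\top=I_n$. The sampling step in \eqref{eqn:so_prac} then reduces to $S_o=S$. Thus $Q_2$ is the orthonormal factor of a Gaussian $n\times r_2$ matrix, which is the uniform Stiefel sample used by RAC.

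First I invoke Lemma \ref{lem_1} and Lemma \ref{lem_3} with $m=n$ and $k=r_2$. Together they give $\mathbb{E}[Q_2Q_2^\top]=\frac{r_2}{n}I_n=\frac1k I_n$, where $k=n/r_2$. Second, I write the error as
\[
X-\tilde X_{\text{rac}} = X(I_n-kQ_2Q_2^\top)
\]
and use $\|A\|_F^2=\operatorname{tr}(AA^\top)$ to get
\[
\mathbb{E}\|X-\tilde X_{\text{rac}}\|_F^2=\operatorname{tr}\bigl(X\,\mathbb{E}[(I_n-kQ_2Q_2^\top)^2]\,X^\top\bigr).
\]
Third, I expand the square using idempotence, $(Q_2Q_2^\top)^2=Q_2Q_2^\top$, which follows from $Q_2^\top Q_2=I_{r_2}$. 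This yields
\[
\mathbb{E}[(I_n-kQ_2Q_2^\top)^2]=I_n-2k\,\mathbb{E}[Q_2Q_2^\top]+k^2\,\mathbb{E}[Q_2Q_2^\top]=(k-1)I_n .
\]
Substituting back gives $(k-1)\operatorname{tr}(XX^\top)=(\frac{n}{r_2}-1)\|X\|_F^2$. This is an equality, so the stated bound holds a fortiori.

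There is no real obstacle here. The only point needing care is to check that the $r_1=0$ degenerate case of Lemma \ref{lem_4} is legitimate, namely that projecting onto the complement of an empty $Q_1$ is the identity. Alternatively, this can be bypassed entirely by appealing to Lemmas \ref{lem_1} and \ref{lem_3} directly, as above. Once the theorem is established, the bound $(\frac{n}{r_2}-1)(\sum_{i=1}^s\sigma_i^2+q)$ quoted in Section \ref{sec:key components} follows from $\|X\|_F^2=\sum_i\sigma_i^2$ together with Assumption \ref{ass:spectral_decay}.
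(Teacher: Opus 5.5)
Your proposal is correct and takes the paper's approach: the paper proves this by specializing Theorem~\ref{thm:gradient_variance_} to $r_1=0$ and $k=n/r_2$. Your computation is that specialization written out directly via Lemmas~\ref{lem_1} and~\ref{lem_3} with $UU^\top=I_n$, and you correctly note that the bound holds with equality.
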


\begin{proof}
This result follows as a special case of Theorem \ref{thm:gradient_variance_} by setting $r_1=0$ and then $k = n/r_2$.
\end{proof}

\begin{theorem}[Lower Bound]\label{thm:lower bound}
    Under Assumption \ref{ass:spectral_decay}, if $s < r < n$, then
    \[
    \eqref{eq:lower bound} \ge \left(\frac{n-s}{r-s}-1\right)q.
    \]
\end{theorem}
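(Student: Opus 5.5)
The plan is to exhibit a specific family of worst-case activations satisfying the $(s,q)$-degenerate condition. For each one I would lower-bound the inner infimum over all unbiased projection distributions, then let the dominant singular values grow so that the bound tends to $(\frac{n-s}{r-s}-1)q$.

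\textbf{Worst-case activation and reduction to a matrix problem.} Take $X=U\Sigma V^\top$ with full column rank $n$. Its $s$ leading singular values equal a parameter $\lambda$, which will be sent to infinity, and its $n-s$ tail singular values all equal $q/(n-s)$ in square, so that $X\sim(s,q)$-D. Set $M=V^\top PP^\top V$. Then $M$ is PSD with $\mathrm{rank}\,M\le r$. Since $X$ has a left inverse, the unbiasedness constraint $\mathbb{E}[XPP^\top]=X$ forces $\mathbb{E}[M]=I_n$. The error becomes
\[
\mathbb{E}\|XPP^\top-X\|_F^2=\sum_{i=1}^n\sigma_i^2\bigl(\mathbb{E}[(M^2)_{ii}]-1\bigr),
\]
using $\mathbb{E}[(M-I)^2]=\mathbb{E}[M^2]-I$. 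Every summand is nonnegative, because $\mathbb{E}(M^2)_{ii}\ge\mathbb{E}(M_{ii})^2\ge(\mathbb{E}M_{ii})^2=1$.

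\textbf{Tail bound in the idealized case.} Suppose first that $Me_i=e_i$ almost surely for $i\le s$. Then $M=I_s\oplus B$, where $B$ is PSD of rank at most $r-s$ and $\mathbb{E}[B]=I_{n-s}$. The tail contribution is $\frac{q}{n-s}\bigl(\mathbb{E}\,\mathrm{tr}(B^2)-(n-s)\bigr)$. Cauchy--Schwarz on the eigenvalues of $B$ gives $\mathrm{tr}(B^2)\ge(\mathrm{tr}B)^2/(r-s)$. Jensen's inequality gives $\mathbb{E}(\mathrm{tr}B)^2\ge(\mathbb{E}\,\mathrm{tr}B)^2=(n-s)^2$. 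Together these yield exactly $q\bigl(\frac{n-s}{r-s}-1\bigr)$. This is also the value that Theorem~\ref{thm:gradient_variance_} shows PRAC attains, which is a sanity check.

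\textbf{Main obstacle: removing the idealization.} For a general unbiased $M$, the top block need not be exactly the identity. I would argue by contradiction with a quantitative estimate. If the infimum for this $X$ were below the target by some $\epsilon>0$ uniformly in $\lambda$, then the top-block cost $\lambda^2\sum_{i\le s}(\mathbb{E}(M^2)_{ii}-1)$ would force $\mathbb{E}\|Me_i-e_i\|^2\le\mathbb{E}(M^2)_{ii}-1=O(1/\lambda^2)$ for $i\le s$. The inequality holds because $\mathbb{E}\|Me_i-e_i\|^2=\mathbb{E}(M^2)_{ii}-2\,\mathbb{E}M_{ii}+1$ and $\mathbb{E}M_{ii}=1$.

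Next, compress $M$ to the tail coordinates. Let $\Pi$ be the tail projector, and note $\mathrm{tr}(\Pi M^2\Pi)\ge\|\Pi M\Pi\|_F^2$. Write $M=M_{\text{top}}+M_{\text{rest}}$, where $M_{\text{top}}$ is the part of $M$ on $\mathrm{span}\{Me_1,\dots,Me_s\}$ (the image of the top coordinates), which has rank at most $s$. I would then show $\Pi M\Pi$ differs in Frobenius norm, in expectation, by $O(1/\lambda)$ from a PSD matrix of rank at most $r-s$ whose expected trace is $n-s-O(1/\lambda)$. Rerunning the Cauchy--Schwarz/Jensen step on that matrix gives a tail bound of $q(\frac{n-s}{r-s}-1)-O(1/\lambda)$. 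Letting $\lambda\to\infty$ yields the bound for the supremum.

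If this perturbation argument becomes messy, the fallback is a compactness/continuity argument on the distribution of $M$. First truncate to $\|M\|$ bounded, which costs little error since large $\|M\|$ is expensive. Then extract a weakly convergent subsequence as $\lambda\to\infty$. Any limit is unbiased, has rank at most $r$, and satisfies $Me_i=e_i$ for $i\le s$, so the idealized case applies. Lower semicontinuity of the tail error then finishes the argument.
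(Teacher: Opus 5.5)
Your reduction ($\mathbb{E}M=I$, error $=\sum_i\sigma_i^2(\mathbb{E}(M^2)_{ii}-1)$, worst case with equal tail singular values) and your idealized computation ($M=I_s\oplus B$, then $\operatorname{tr}(B^2)\ge(\operatorname{tr}B)^2/(r-s)$ and Jensen) are exactly steps 2--3 of the paper's proof. The real difference is how the top block is removed.

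The paper does this in one line. If a distribution has $\sum_{i\le s}\alpha_i>0$, its error blows up as the top singular values $t\to\infty$, so one ``need only consider'' block-diagonal $P$. That argument holds the distribution fixed while $X$ varies, so it controls $\inf_P\sup_X$. The statement instead concerns $\sup_X\inf_P$, where $\mathbb{P}_X$ is chosen after $X$. For finite $t$ the optimal distribution does leak into the top block. For example, take $n=3$, $s=1$, $r=2$, tail $\sigma_2^2=\sigma_3^2=q/2$ and $\sigma_1^2<2q$: mixing in a tail-only rank-$2$ projection with small probability beats $q$.

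Your route is longer: near-optimal $P$ have $\mathbb{E}\|Me_i-e_i\|^2=O(1/\lambda^2)$, and then you prove a stability estimate for the rank/trace bound as $\lambda\to\infty$. But this is what the sup--inf formulation actually requires. So your proposal repairs a gap in the paper's step 1 rather than re-deriving it.

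When you write out the stability step, note that an $O(1/\lambda)$ bound on $\mathbb{E}\|\Pi M\Pi-B\|_F$ does not by itself control $\mathbb{E}\|\Pi M\Pi\|_F^2$. A clean way to do it uses the Schur complement.

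\textbf{Good event.} Let $G=\{M_{11}\succeq\tfrac12 I_s\}$, where $M_{11}$ is the top $s\times s$ block. On $G$, let $B=M_{22}-M_{21}M_{11}^{-1}M_{12}$, which is PSD of rank at most $r-s$, and $C=M_{21}M_{11}^{-1}M_{12}\succeq0$. Then $M_{22}=B+C$ gives $\|M_{22}\|_F^2\ge\|B\|_F^2$. Also $\operatorname{tr}C\le 2\|M_{12}\|_F^2$, whose expectation is $O(1/\lambda^2)$.

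\textbf{Bad event.} On $G^c$, set $B=0$. By Markov, $G^c$ has probability $O(1/\lambda^2)$. Its trace contribution is $O(1/\lambda)$ by Cauchy--Schwarz. This works because, for any distribution whose error is below the target, the error bound itself gives $\sup_\lambda\mathbb{E}\operatorname{tr}(M^2)<\infty$. Together these yield $\mathbb{E}\operatorname{tr}B\ge n-s-O(1/\lambda)$, and the Cauchy--Schwarz/Jensen step then goes through.

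The same uniform $L^2$ bound is what your compactness fallback needs in place of truncation; truncating would destroy $\mathbb{E}M=I$. The $L^2$ bound gives tightness and uniform integrability. Hence $\mathbb{E}M=I$, $\operatorname{rank}M\le r$ and $Me_i=e_i$ for $i\le s$ all pass to the weak limit, and the tail cost is lower semicontinuous along the sequence.
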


\begin{proof}
    Without loss of generality, assume the activation matrix is $X = \operatorname{diag}(\sigma_1,\dots,\sigma_n)\in\mathbb{R}^{n\times n}$ with $\sigma_1\ge\sigma_2\ge\cdots\ge\sigma_n>0$ and, by Assumption \ref{ass:spectral_decay}, $\sum_{i=s+1}^{n}\sigma_i^2\le q$. The unbiasedness condition $\mathbb{E}[XPP^\top]=X$ implies $\mathbb{E}[PP^\top]=I$. Set $A=PP^\top\in\mathbb{R}^{n\times n}$; then the expected error can be written as
    \begin{equation}
    \epsilon = \mathbb{E}\|X(A-I)\|_F^2 = \sum_{i=1}^{n}\sigma_i^2\Bigl[\mathbb{E}(A_{ii}-1)^2 + \sum_{j\neq i}\mathbb{E}A_{ij}^2\Bigr] \equiv \sum_{i=1}^{n}\sigma_i^2\alpha_i,
    \end{equation}
    where $\alpha_i\ge0$ is defined by the bracket.

    \begin{enumerate}
        \item \textbf{Vanishing of the first $s$ coefficients.} Consider a family of $X$ where $\sigma_i=t$ for $i\le s$ while $\sigma_{s+1},\dots,\sigma_n$ are fixed and satisfy the sum‑of‑squares constraint. If for some distribution of $P$ we have $\sum_{i=1}^{s}\alpha_i>0$, then sending $t\to\infty$ would make $\epsilon$ arbitrarily large, so $\sup_X\inf_P\epsilon = \infty$ and the bound holds trivially. Hence we need only consider strategies with $\sum_{i=1}^{s}\alpha_i=0$, i.e. $\alpha_i=0$ for all $i\le s$. This forces $P$ to be block‑diagonal almost surely:
        \begin{equation}
        P = \begin{bmatrix} I_s & 0 \\ 0 & \tilde{P} \end{bmatrix},\qquad \tilde{P}\in\mathbb{R}^{(n-s)\times(r-s)}.
        \end{equation}
        Consequently the error reduces to
        \begin{equation}\label{eq:error_reduced}
        \epsilon = \sum_{i=s+1}^{n}\alpha_i\sigma_i^2.
        \end{equation}
        \item \textbf{Lower bound on the sum of the remaining $\alpha_i$.} Let $\tilde{A}=\tilde{P}\tilde{P}^\top\in\mathbb{R}^{(n-s)\times(n-s)}$. From $\mathbb{E}[A]=I$ we obtain $\mathbb{E}[\operatorname{tr}(\tilde{A})] = \mathbb{E}[\operatorname{tr}(A)]-s = n-s$. Because $\operatorname{rank}(\tilde{A})\le r-s$, Cauchy–Schwarz gives
        \begin{equation}
            \operatorname{tr}(\tilde{A}^2) \ge \frac{[\operatorname{tr}(\tilde{A})]^2}{r-s}.
            \end{equation}
            Taking expectations and using Jensen's inequality,
            \begin{equation}
            \mathbb{E}[\operatorname{tr}(\tilde{A}^2)] \ge \frac{(\mathbb{E}[\operatorname{tr}(\tilde{A})])^2}{r-s} = \frac{(n-s)^2}{r-s}.
            \end{equation}
            Now compute the sum of $\alpha_i$ for $i>s$:
            \begin{equation}\label{eq:sum_alpha_bound}
            \begin{aligned}
            \sum_{i=s+1}^{n}\alpha_i
            &= \sum_{i=s+1}^{n}\Bigl[\mathbb{E}(A_{ii}-1)^2 + \sum_{j\neq i}\mathbb{E}A_{ij}^2\Bigr] \\
            &= \sum_{i=s+1}^{n}\Bigl[\sum_{j=1}^{n}\mathbb{E}A_{ij}^2 - 1\Bigr] \qquad (\text{since }\mathbb{E}A_{ii}=1)\\
            &= \sum_{i=s+1}^{n}\sum_{j=s+1}^{n}\mathbb{E}A_{ij}^2 - (n-s) \qquad (\text{by the block‑diagonal form})\\
            &= \mathbb{E}[\operatorname{tr}(\tilde{A}^2)] - (n-s)\\
            &\ge \frac{(n-s)^2}{r-s} - (n-s) \equiv C.
            \end{aligned}
        \end{equation}
        \item \textbf{Completion of the lower bound.} For any fixed $\sigma_{s+1},\dots,\sigma_n$, using $\alpha_i\ge0$ we have
        \begin{equation}
        \sum_{i=s+1}^{n}\alpha_i\sigma_i^2 \ge \Bigl(\sum_{i=s+1}^{n}\alpha_i\Bigr)\min_{s+1\le i\le n}\sigma_i^2 \ge C\cdot\min_i\sigma_i^2.
        \end{equation}
        Taking the infimum over admissible $\alpha_i$ (i.e. over distributions of $P$) and then the supremum over $\sigma_i$ satisfying $\sum_{i=s+1}^{n}\sigma_i^2\le q$ yields
        \begin{equation}
        \sup_{\sigma_i}\inf_{\alpha_i}\sum_{i=s+1}^{n}\alpha_i\sigma_i^2 \ge C\cdot\sup_{\sigma_i}\min_i\sigma_i^2.
        \end{equation}
        Under the sum‑of‑squares constraint, $\sup\min_i\sigma_i^2$ is attained when all $\sigma_i^2$ are equal, i.e. $\sigma_i^2 = q/(n-s)$; hence $\sup\min_i\sigma_i^2 = q/(n-s)$. Substituting this together with the value of $C$ from \eqref{eq:sum_alpha_bound} gives
        \begin{equation}
        \sup_{\sigma_i}\inf_{\alpha_i}\sum_{i=s+1}^{n}\alpha_i\sigma_i^2 \ge \Bigl(\frac{(n-s)^2}{r-s}-(n-s)\Bigr)\frac{q}{n-s}
        = \Bigl(\frac{n-s}{r-s}-1\Bigr)q.
        \end{equation}
        By \eqref{eq:error_reduced} this is exactly a lower bound for the quantity $\eqref{eq:lower bound}$, completing the proof.
    \end{enumerate}
    
\end{proof}

\presec
\section{Experiment Details}
\postsec
\subsection{Pre-training Setting}\label{sec:Pre-training Setting} We detail the architectural configurations and pre-training hyperparameters for both the LLaMA and GPT models. To ensure numerical stability and computational efficiency, all LLaMA models are trained using bfloat16 precision and GPT models are trained using Automatic Mixed Precision. The key hyperparameters for LLaMA and GPT models across different scales are summarized in Table \ref{tab:pretrain_setting}.

\noindent\textbf{LLaMA Settings.} The LLaMA models are trained with a maximum sequence length of 256 and a global batch size of 512. We employ a learning rate schedule comprising a linear warmup over the first 10\% of training steps, followed by a cosine annealing phase that decays the learning rate to 10\% of its peak value. Across all LLaMA model scales, we adopt the optimal learning rates reported in the original papers for comparison methods. For PRAC, the learning rate is selected from the grid $\left\{0.0006,0.0008,0.001,0.002,0.004\right\}$, which is closely aligned with the settings used by the baselines.

\noindent\textbf{GPT Settings.} For the GPT models, we adopt a maximum sequence length of 1024 while maintaining a global batch size of 512. We set the warmup steps to 2K, which corresponds to 4\% of the total steps\footnote{\url{https://github.com/zyushun/Adam-mini/tree/main}}. Notably, given that the official code and hyperparameters for GaLore \citep{galore2024} and RSO \citep{rso2025} on GPT models are not publicly available, we train these models using learning rates aligned with the baseline. For these methods, we configure the ranks as 256 and 512 for the 124M and 355M models, respectively.

\begin{table}[!ht]
    \vspace{-0.4cm}
    \caption{Hyperparameter settings for pre-traing LLaMA and GPT-2 model.}
    \vspace{0.3cm}
    \label{tab:pretrain_setting}
    \setlength{\tabcolsep}{5pt}
    \begin{center}
    \begin{small}
    \begin{tabular}{lcccccccc}
    \toprule
    & Params & Hidden & Intermediate & Heads & Layers & Steps\\
    \cmidrule{1-7}
    \multirow{5}*{\tabincell{c}{LLaMA}}
    &35M & 384 & 1024 & 8 & 6 & 5K\\
    &60M & 512 & 1376 & 8 & 8 & 10K\\
    &130M & 768 & 2048 & 12 & 12 & 20K\\
    &350M & 1024 & 2736 & 16 & 24 & 60K\\
    &1B & 2048 & 5461 & 24 & 32 & 150K \\
    \cmidrule{1-7}
    \multirow{2}*{\tabincell{c}{GPT-2}}
    &124M & 768 & 3072 & 12 & 12 & 50K\\
    &355M & 1024 & 4096 & 16 & 24 & 50K\\
    \bottomrule
    \end{tabular}
    \end{small}
    \end{center}
    \vspace{-0.5cm}
\end{table}
\subsection{Fine-tuning Setting}\label{sec:Fine-tuning Setting}
We fine-tune the pre-trained RoBERTa-Base model on the GLUE benchmark using the Hugging Face library\footnote{\url{https://huggingface.co/transformers/model_doc/roberta.html}}. All tasks are trained for 30 epochs with a batch size of 16. Refer to Table \ref{finetune_roberta_setting} for the specific hyperparameters employed for PRAC.
\begin{table*}[!ht]
    \vspace{-0.2cm}
    \caption{Hyperparameter settings for fine-tuning RoBERTa-Base model on the GLUE benchmark using the PRAC method.}
    \vspace{-0.2cm}
    \label{finetune_roberta_setting}
    \setlength{\tabcolsep}{3pt}
    \begin{center}
    \begin{small}
    \begin{tabular}{lccccccccc}
    \toprule
       & COLA & STSB  & MRPC & RTE & SST2 & MNLI & QNLI & QQP\\
    \cmidrule{1-9}
    Batch Size        & 16 & 16 & 16 & 16 & 16 & 16 & 16 & 16\\
    Epochs            & 30 & 30 & 30 & 30 & 30 & 30 & 30 & 30\\
    Learing Rate      & 3E-05 & 3E-05 & 3E-05 & 3E-05 & 1E-05 & 1E-05 & 1E-05 & 1E-05\\
    Rank ($r_1+r_2$)  &\multicolumn{8}{c}{4}\\
    PRAC Interval     &\multicolumn{8}{c}{200}\\
    Max Seq Length    &\multicolumn{8}{c}{512}\\
    \bottomrule
    \end{tabular}
    \end{small}
    \end{center}
\end{table*}

\subsection{Additional Result}\label{additional result}
\noindent\textbf{Compatibility with Advanced Optimizers.} PRAC is orthogonal to optimizer choice. We integrate PRAC with memory-efficient optimizers Muon \citep{muon2025} and Adam-mini \citep{adam_mini2024}. As shown in Figure \ref{fig:mix} and Table \ref{tab:mix}, PRAC successfully reduces memory by an additional approximately 30\% on top of these optimizers with negligible perplexity degradation, highlighting its versatility.

\begin{figure*}[!ht]\label{mix}
    \scriptsize
    \setlength{\tabcolsep}{2pt}
    \begin{center}
    \begin{tabular}{ccccc}
        \includegraphics[width=0.23\textwidth]{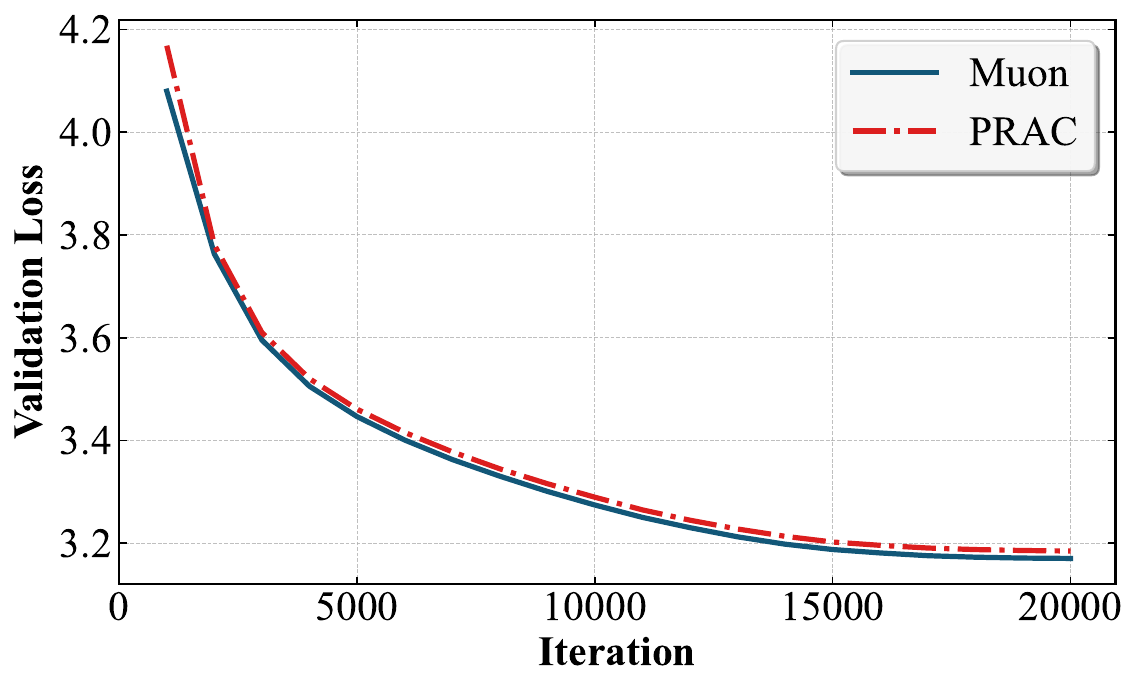}&
        \includegraphics[width=0.23\textwidth]{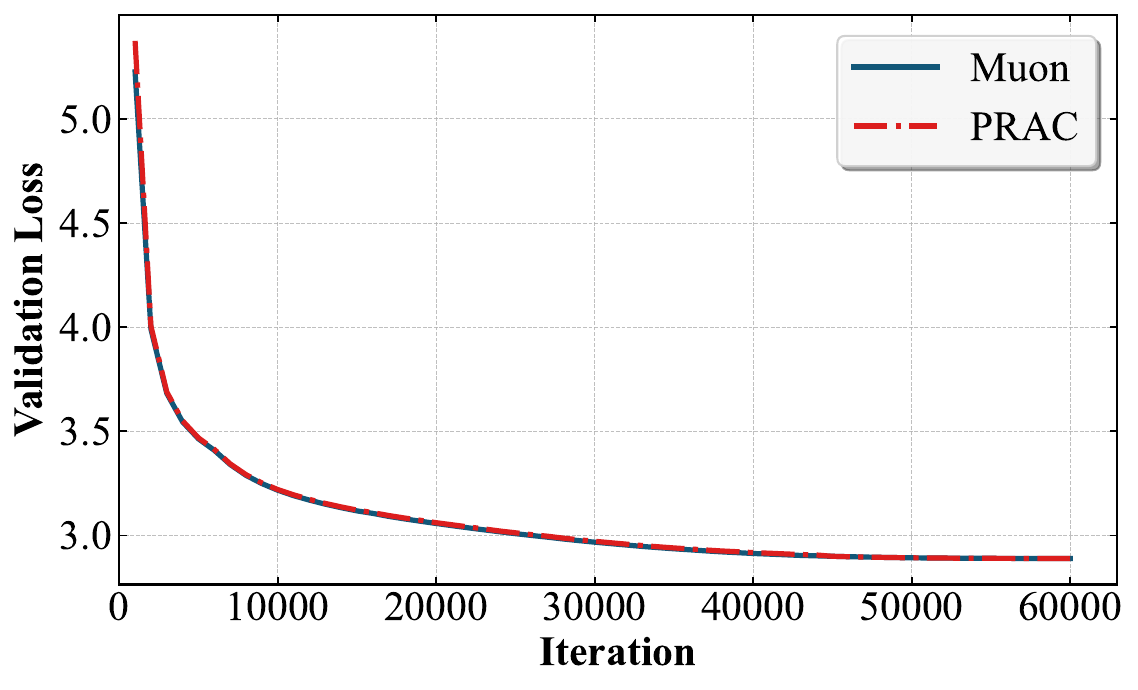}&
        \includegraphics[width=0.23\textwidth]{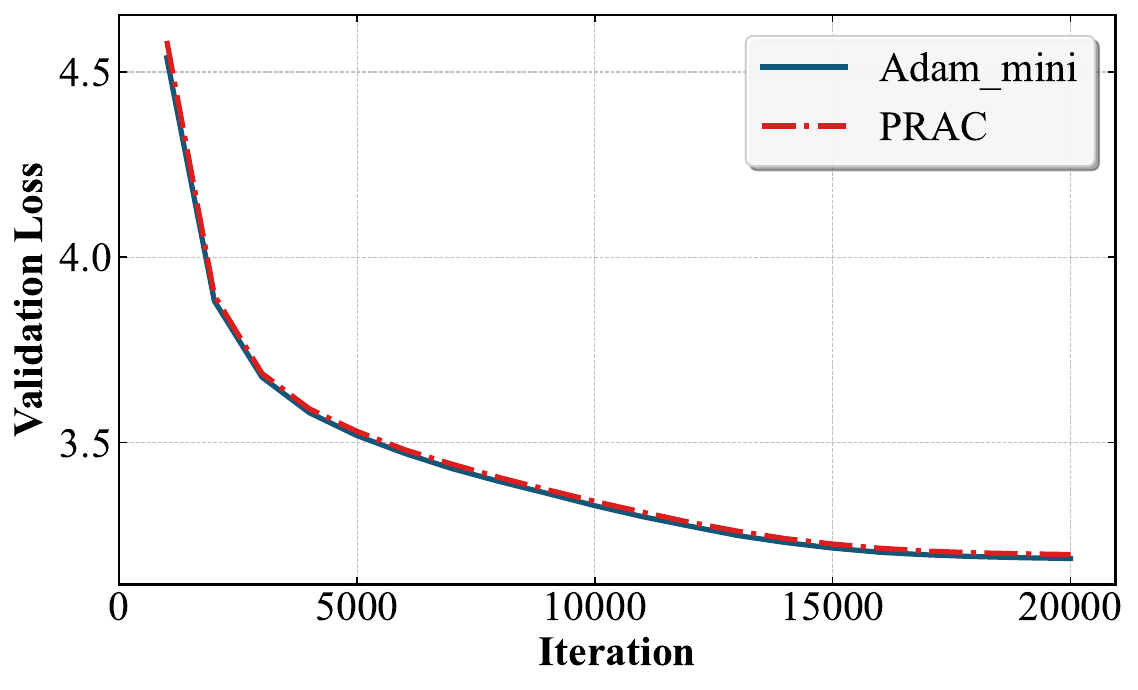}&
        \includegraphics[width=0.23\textwidth]{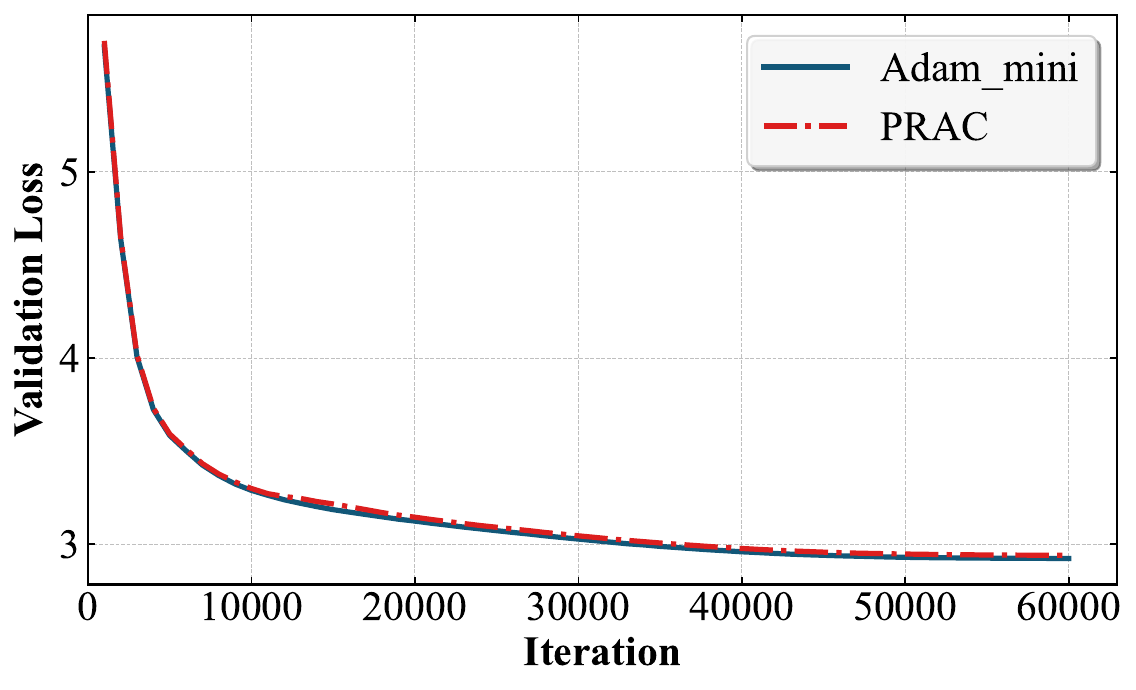}&
        \\
        (a) Muon (LLaMA 130M) & (b) Muon (LLaMA 350M) & (c) Adam-mini (LLaMA 130M) & (d) Adam-mini (LLaMA 350M)\\
    \end{tabular}
    \end{center}
    \vspace{-0.2cm}
    \caption{Loss curves of pre-training LLaMA model based on the Muon and Adam-mini optimizers w./w.o. PRAC.}
    \vspace{-0.5cm}
    \label{fig:mix}
\end{figure*}

\begin{table}[!ht]
    \vspace{-0.3cm}
    \caption{Combining with advanced optimizers on pretraining LLaMA. Report validation perplexity (PPL, lower is better) and the algorithm's peak memory usage (lower is better).}
    \vspace{-0.3cm}
    \label{tab:mix}
    \setlength{\tabcolsep}{4pt}
    \centering
    \begin{small}
    \begin{tabular}{lccccc}
    \toprule
    \multirow{2}{*}{Method} & \multirow{2}{*}{PRAC} & \multicolumn{2}{c}{LLaMA-130M} & \multicolumn{2}{c}{LLaMA-350M} \\
    \cmidrule{3-6}
    & & Ppl & Mem & Ppl & Mem \\
    \midrule
    \multirowcell{2}{Muon} & $\times$ & 23.90 & 21.04 & 17.98 & 45.26 \\
        & $\checkmark$ & 24.16 & 15.37 ($\bf{\downarrow 27\%}$) & 18.04 & 30.82 ($\bf{\downarrow 32\%}$) \\
    \midrule
    \multirowcell{2}{Adam-\\mini} & $\times$ & 24.22 & 21.01 & 18.60 & 45.42 \\
        & $\checkmark$ & 24.48 & 15.36 ($\bf{\downarrow 27\%}$)& 18.92 & 31.09 ($\bf{\downarrow 31\%}$)\\
    \bottomrule
    \end{tabular}
    \end{small}
\end{table}

\noindent\textbf{Selection of the Scaling Factor $k$.} We evaluate different values of $k$ on the linear and normalization layers of LLaMA-130M. Defining the theoretical baseline as $k_0=\frac{n-r_1}{r_2}$, we test $k\in \left\{k_0,0.5k_0,0.2k_0,1.2k_0\right\}$. The results in Figure \ref{fig:ablation_k0} indicate that $k=k_0$  yields the best convergence, whereas other settings result in performance degradation. This empirical evidence aligns with our theoretical findings (See Section \ref{sec:prac}).

\begin{figure}[!ht]
    \scriptsize
    \setlength{\tabcolsep}{2pt}
    \begin{center}
    \begin{tabular}{ccccc}
        \includegraphics[width=0.26\textwidth]{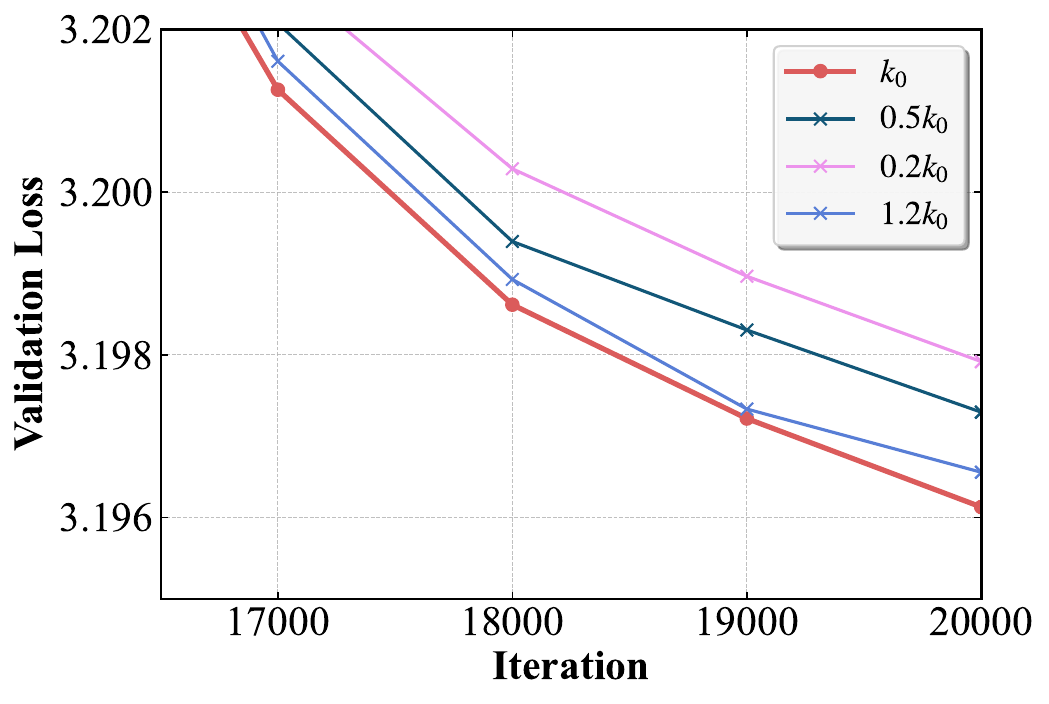}&
        \includegraphics[width=0.26\textwidth]{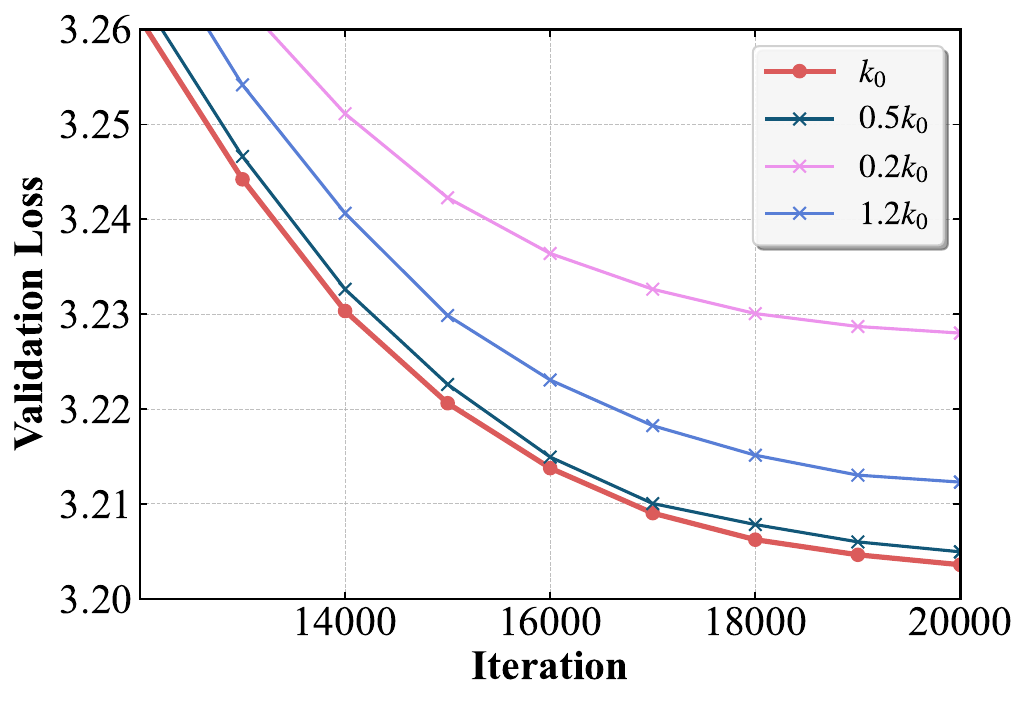}&
        \\
        (a) Linear Layer & (b) Norm Layer\\
    \end{tabular}
    \end{center}
    \vspace{-0.3cm}
    \caption{Loss curve of using different $k$ in LLaMA-130M pre-training, where $k_0=\frac{n-r_1}{r_2}$, Setting 
    $k=k_0$ performs better than larger or smaller settings, whether in linear or nonlinear layers.}
    \vspace{-0.5cm}
    \label{fig:ablation_k0}
\end{figure}

\end{document}